\definecolor{forestgreen}{rgb}{0.0, 0.5, 0.0}
\definecolor{mydarkblue}{rgb}{0, 0, 0.5}
\newcommand{\yuxi}[1]{\textcolor{violet}{$_{yuxi}$[#1]}}
\newcommand{\specialhighlight}[1]{\textcolor{forestgreen}{#1}}
\newcommand{\pyobject}[1]{\texttt{\textcolor{gray}{#1}}}
\newcommand{\uc}{\textit{\underbar c}}
\newcommand{\bc}{\bar c}
\DeclareMathAlphabet{\pazocal}{OMS}{zplm}{m}{n}
\newcommand{\cC}{\mathcal{C}}
\newcommand{\cE}{\mathcal{E}}
\newcommand{\cS}{\mathcal{S}}
\newcommand{\plmg}{P_{\mathrm{LM}_{\mathcal{G}}}}
\newcommand{\cost}[1]{\colorbox[RGB]{243, 243, 243}{\makebox(30,6){#1}}}
\title{Self-Evaluation Guided Beam Search for Reasoning}
\author{%
  Yuxi Xie$^{1}$\footnotemark[1]\thanks{Correspondence to: Yuxi Xie (\href{xieyuxi@u.nus.edu}{xieyuxi@u.nus.edu}).} \quad Kenji Kawaguchi$^{1}$ \quad Yiran Zhao$^{1}$ \quad James Xu Zhao$^{1}$ \\
  \textbf{Min-Yen Kan}$^{1}$\thanks{Equal advising. Ordering is determined by dice rolling.} \quad \textbf{Junxian He}$^{2}$\footnotemark[2] \quad \textbf{Michael Qizhe Xie}$^{1}$\footnotemark[2] \\
  $^1$ National University of Singapore $^2$ The Hong Kong University of Science and Technology \\
}
\begin{document}

\maketitle

\begin{abstract}
    Breaking down a problem into intermediate steps has demonstrated impressive performance in Large Language Model (LLM) reasoning. However, the growth of the reasoning chain introduces uncertainty and error accumulation, making it challenging to elicit accurate final results. To tackle this challenge of uncertainty in multi-step reasoning, we introduce a stepwise self-evaluation mechanism to guide and calibrate the reasoning process of LLMs. We propose a decoding algorithm integrating the self-evaluation guidance via stochastic beam search. The self-evaluation guidance serves as a better-calibrated automatic criterion, facilitating an efficient search in the reasoning space and resulting in superior prediction quality. Stochastic beam search balances exploitation and exploration of the search space with temperature-controlled randomness. Our approach surpasses the corresponding Codex-backboned baselines in few-shot accuracy by $6.34\%$, $9.56\%$, and $5.46\%$ on the GSM8K, AQuA, and StrategyQA benchmarks, respectively. Experiment results with Llama-2 on arithmetic reasoning demonstrate the efficiency of our method in outperforming the baseline methods with comparable computational budgets. Further analysis in multi-step reasoning finds our self-evaluation guidance pinpoints logic failures and leads to higher consistency and robustness. Our code is publicly available at \href{https://guideddecoding.github.io/}{https://guideddecoding.github.io/}.

\end{abstract}

\section{Introduction}~\label{sec:intro}
The remarkable empirical achievements of Large Language Models (LLMs) have recently ushered in a new era in machine reasoning through few-shot prompting techniques~\citep{brown2020language, chowdhery2022palm, touvron2023llama, openai2023gpt4}. In particular, breaking down a problem into intermediate stages, or a reasoning chain, can significantly improve model performance on reasoning tasks~\citep{cobbe2021training}. Various prompting approaches have been proposed to define these chains, such as \textit{scratchpads}~\citep{nye2021work}, \textit{chain-of-thought} (CoT)~\citep{wei2022chain}, \textit{least-to-most}~\citep{zhou2022least}, and \textit{program-aided language models} (PAL)~\citep{gao2022pal, chen2022program}. However, as the complexity and length of reasoning chains increase with the difficulty of tasks, LLMs struggle with errors and imperfections that accumulate across multiple intermediate steps~\citep{wu2016googles, guo2018long, chen2022program}. Furthermore, the growing number of steps leads to an exponential growth in the search space for reasoning, making it exceedingly difficult to obtain accurate final outcomes.

Confronted with the challenges of uncertainty in multi-step chaining, several previous studies have worked on different aspects to alleviate the impact of reasoning errors. For instance, \citet{wang2022selfconsistency} introduce \textit{self-consistency} as a method to determine the final answer through majority voting using multiple sampled reasoning paths, while \citet{li2022advance} investigate various prompts to diversify the sampling outcomes. \citet{gao2022pal} and \citet{chen2022program} utilize Python programs to achieve higher accuracy in mathematical computations. While these approaches have contributed to significant performance improvements in reasoning, the process of generating reasoning chains has been parameterized as a standard autoregressive process and intrinsically faces the challenge of sampling within an exponentially large search space.

Motivated by this challenge, we employ LLM \textit{self-evaluation}~\citep{kadavath2022language} as a better-calibrated criterion to automatically guide the search in the reasoning space, drawing inspiration from prior works on utilizing LLMs for self-evaluation~\citep{rae2021scaling, paul2023refiner, madaan2023selfrefine, shinn2023reflexion}. We integrate the self-evaluation guidance for reasoning in a stepwise and generalizable manner. Specifically, we formulate the reasoning chain generation as a decoding process consisting of multiple intermediate steps. Unlike traditional text decoding where each step produces a single token, we consider each decoding step as a reasoning logic composed of a sequence of tokens. This framework enables us to employ beam search~\citep{Jurafsky2009, graves2012sequence} decoding tailored for intermediate steps and guide the beam searching process by controlling the error of each reasoning step to prevent potential error accumulation throughout the chaining. Figure~\ref{fig:intro} illustrates an example of decoding a chain of program-aided reasoning steps. Furthermore, we incorporate temperature-controlled randomness~\citep{ACKLEY1985147,kool2019stochastic, meister2021conditional} into the traditional (deterministic) beam search to balance the quality--diversity trade-off in searching for better reasoning chains. Our approach has resulted in respectable improvements across various arithmetic, symbolic, and commonsense reasoning tasks. For instance, by guiding the reasoning decoding process of the Codex model~\citep{chen2021evaluating}, we achieve accuracies of $85.5\%$, $64.2\%$, and $77.2\%$ on the GSM8K, AQuA, and StrategyQA benchmarks, compared to the vanilla reasoning-enhanced Codex performance of $80.4\%$, $58.6\%$, and $73.2\%$, respectively. Our further analysis on Llama-2~\citep{DBLP:journals/corr/abs-2307-09288} demonstrates the efficiency of our method in surpassing the self-consistency baseline under equivalent computational budgets.

\begin{figure}[t]
  \centering
  \includegraphics[width=\textwidth]{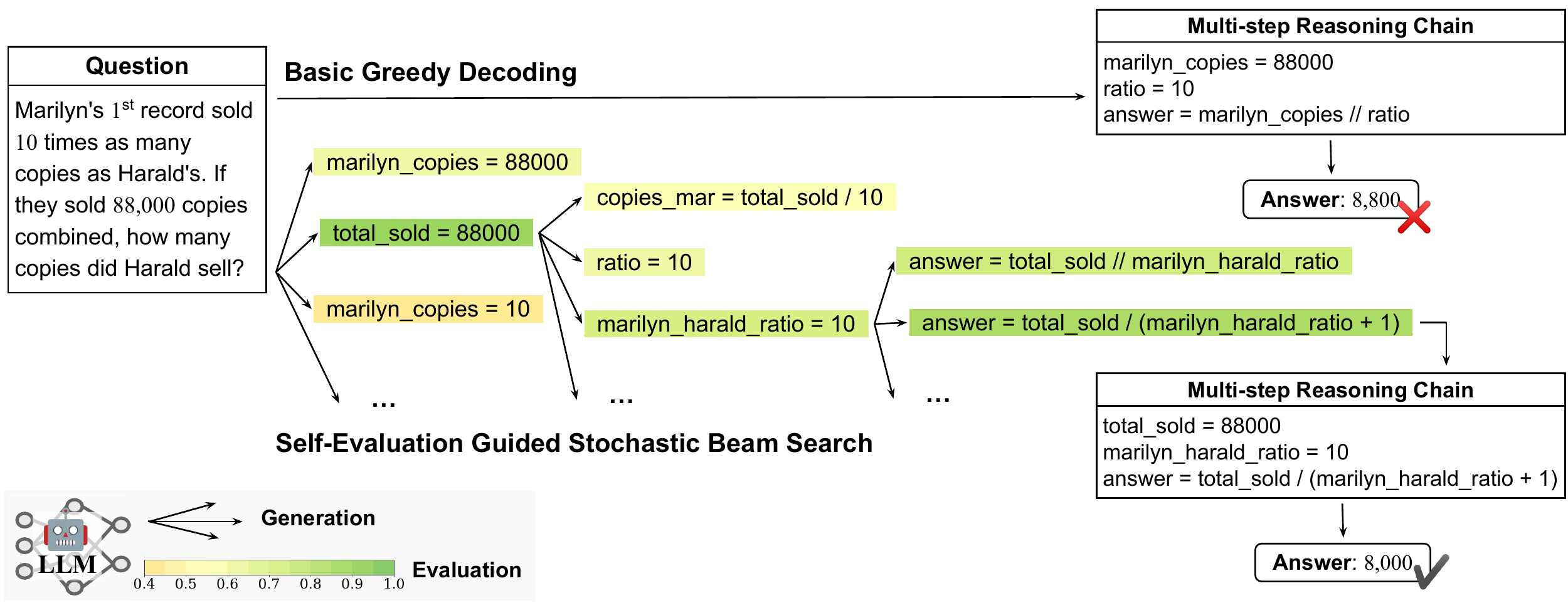}
  \caption{
  Self-Evaluation can calibrate the decoding direction in multi-step reasoning. We illustrate our method in the form of stepwise stochastic beam search with the beam size equal to $1$. The scale of the self-evaluation score is visualized in the colormap.   We adopt Program-Aided Language models (PAL) reasoning~\citep{gao2022pal, chen2022program} for this math word problem.}
  \label{fig:intro}
\end{figure}

\section{Self-Evaluation Guided Stochastic Beam Search}

\begin{figure}[t]
  \centering
  \includegraphics[width=\textwidth]{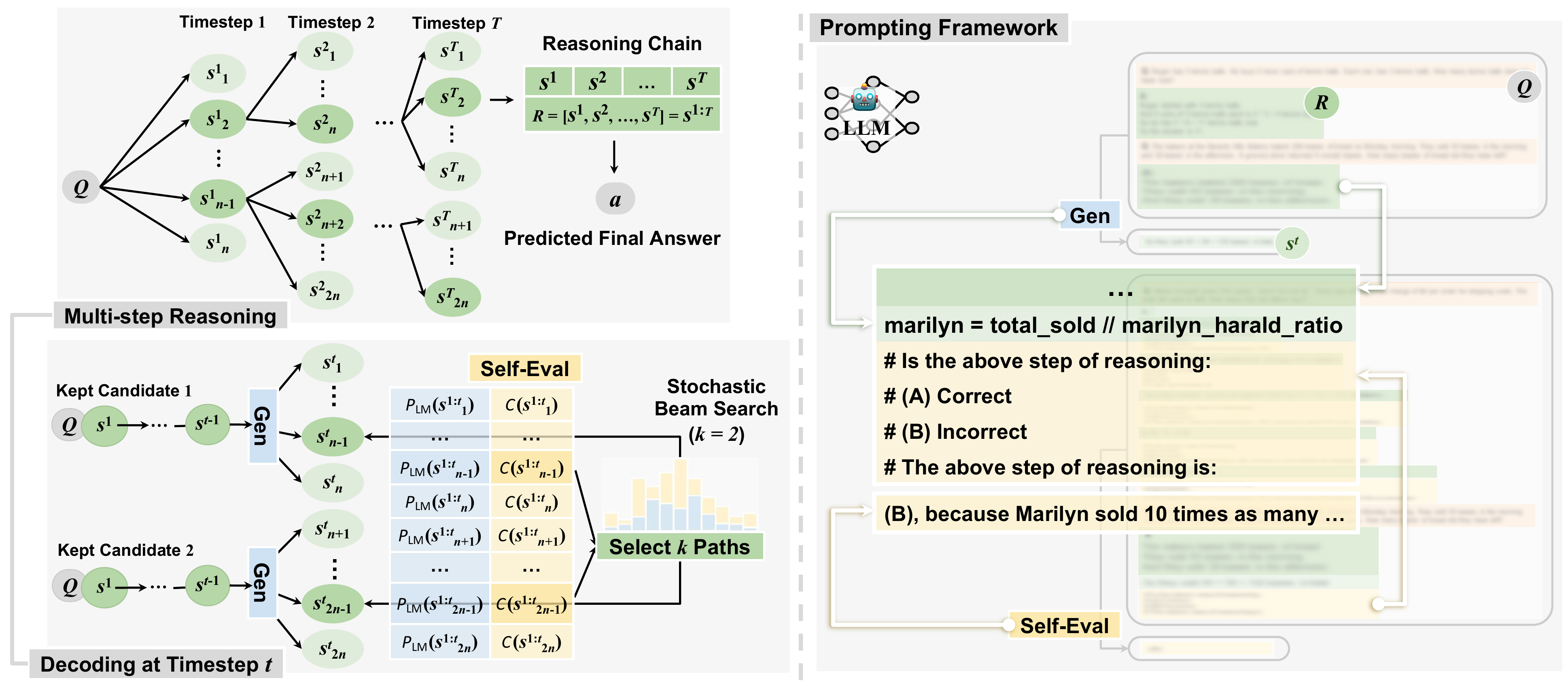}
  \caption{Our framework of self-evaluation guided stochastic beam search for multi-step reasoning. The schema of the decoding process is on the left, where we keep $k=2$ candidates at each timestep, with the detailed illustration of timestep $t$ at the bottom. Here ``Gen'' and ``Self-Eval'' represent the generation and evaluation LLMs, respectively. The corresponding prompt formulations are provided on the right, where the questions $Q$, reasoning steps $R$, and evaluation scripts are highlighted in \colorbox[RGB]{253,229,205}{\makebox(24,4){orange}}, \colorbox[RGB]{217,234,211}{\makebox(21,4){green}}, and \colorbox[RGB]{255,242,204}{\makebox(25,4){yellow}}, respectively. Steps in light green (\textit{e.g.}, $s^t$) are for models to generate or evaluate at the current timestep. Specifically, we follow \citet{kadavath2022language} to prompt the LLM evaluation by answering the multiple-choice question, \textit{i.e.}, the lines starting with \texttt{\#}. }
  \label{fig:framework}
\end{figure}

Considering the input prompt and question $Q$ represented as $x$, we formulate the answer distribution $P(a\mid x)$ by decomposing it as a reasoning chain generation process $P(R\mid x)$ and an answer generation process $P(a\mid R, x)$:
\begin{equation}
    P(a\mid x) = \mathbb{E}_{R\sim P(R\mid x)} P(a\mid R, x),
    \label{eq:target}
\end{equation}
where $R$ is the intermediate reasoning chain variable that is typically a text sequence. $P(a\mid R, x) = \frac{\one_A(a)}{\max{(|A|, 1)}}$, where $A = \texttt{execute}(R)$ represents the set of predicted answer(s) interpreted from $R$, and $\one_A$ is the indicator function of the subset $A$. In practice, $|A| \ge 0$ can be $0$ or larger than $1$ when the reasoning $R$ returns no valid answer or produces more than one possible answers, respectively.

Prior research has modeled the reasoning chain generation $P(R\mid x)$ by prompting LLMs to explicitly elaborate on the required intermediate steps $R$. Through setting different prompting schemes, the reasoning process $P(R\mid x)$ can be modeled as chain-of-thought free-text reasoning~\citep{NEURIPS2022_8bb0d291,wei2022chain}, a two-stage question decomposition and answering pipeline~\citep{zhou2022least}, or program-aided reasoning to generate a python program~\citep{gao2022pal,chen2022program}. While effective, previous work mostly uses a single sample of $R$ from the LLMs to approximate the expectation in Eq.~\ref{eq:target} -- the generated reasoning chain is often unreliable and causes incorrect answers. To mitigate this issue, ~\citet{wang2022selfconsistency} conduct majority voting to approximate the expectation via sampling and aggregating multiple reasoning chains. ~\citet{li2022advance} take a further step to diversify the sampling and calibrate $P(R\mid x)$ with a task-specific fine-tuned verifier. Another line of work focuses on improving $P(a\mid R,x)$ instead. For example, \citet{gao2022pal} and \citet{chen2022program} employ Python programs for more accurate calculations in math word problems.

In this work, we focus on improving $P(R\mid x)$ to enhance the consistency of the sampled reasoning chains. To this end, we propose to explicitly break down the reasoning process into multiple steps, as shown in Figure~\ref{fig:framework}, where each step yields a semantically integrated sequence of tokens, representing a single step within the overall reasoning chain. From this perspective, we can approach the task of enhancing $P(R\mid x)$ as a decoding problem over the reasoning chains. Considering the exponentially large search space and the potential unreliability of LLM-produced chains in reasoning, we propose a constrained stochastic beam search decoding approach to improve the reasoning step by step and obtain high-quality reasoning with a limited number of samples. We detail our approach next.

\subsection{Multi-step Reasoning via Stochastic Beam Search}~\label{sec:sbs}
In multi-step reasoning, a reasoning chain of $T$ steps is sequentially generated through several timesteps as $R = [s^1, s^2, \cdots, s^T] = s^{1:T}$, where $s^t$ represents a sequence of tokens as the $t$-th step. Formally, the reasoning generation process $P(R\mid x)$ can be factorized in an autoregressive manner:
\begin{equation}
    \label{eq:factor}
    P(R=s^{1:T}\mid x) = \prod_t P(s^t\mid x, s^{1:t-1}),
\end{equation}
which resembles the typical token-level autoregressive distribution of language models. Stepwise reasoning allows us to formulate the process as a step-by-step decoding problem, where we can utilize widely-used strategies such as beam search for the generation. Different from the typical text decoding process where each step consists of a single token, here we view a sequence of reasoning tokens as a single step. One of the most severe issues in LLM-based reasoning is the potential unreliability and inaccuracy of each reasoning step generated by the model. Furthermore, errors from individual steps may accumulate throughout the reasoning chain, exacerbating the problem. To address the issue, we define a constraint function $\cC(s^t, s^{1:t-1}) \in [0,1]$ within each reasoning step\footnote{For ease of notation, we will use $\cC(s^t)$ throughout the paper when there is no confusion.} that outputs the LLM confidence in the correctness of the reasoning sequence $s^t$ based on the previous context $s^{1:t-1}$. Then, we present a constrained decoding approach that combines the language model probability and the correctness confidence as a new decoding objective function $\cE(s^{1:T})$:
\begin{equation}
    \label{eq:constrain}
    \begin{split}
        \cE(s^{1:T}) = \prod_t \plmg^{\lambda}(s^t\mid x, s^{1:t-1})\cC^{1-\lambda}(s^t),
    \end{split}
\end{equation}
where $\plmg$ is the language model distribution~\footnote{We will denote the LM generation probability by $\mathcal{P}$ throughout the paper for simplification.}. $\lambda\in [0,1]$ is a weight hyperparameter to balance the LM score and the confidence score. We will detail the design of $\cC(s^t)$ in Section~\ref{sec:eval}. Eq~\ref{eq:constrain} follows an autoregressive factorization form, and thus traditional token-level decoding methods such as beam search can be applied here on the chain level. As it is desirable to obtain high-quality reasoning chains with limited samples that are scored high by $\cE(s^{1:T})$, it is natural to utilize greedy or beam search decoding to approximate the reasoning sequences that maximize $\cE(s^{1:T})$. 

Additionally, multiple diverse reasoning chains could be aggregated to further improve the final accuracy, as suggested by Eq~\ref{eq:target} and empirically confirmed by self-consistency reasoning~\citep{wang2022selfconsistency}. To this end, we propose a variant of stochastic beam search~\citep{kool2019stochastic, meister2021conditional} to strike a tradeoff between exploration and exploitation. Concretely, for beam size $k$, at each reasoning step we draw $n$ samples of $s^t$ following $\plmg(s^t\mid x, s^{1:t-1})$ for each beam, and we end up with $nk$ chain hypotheses of $s^{1:t}$ to form the candidate set $\cS$, then we perform beam pruning through sampling -- we sample $k$ reasoning beams without replacement, rather than finding the $\arg\max k$, following a distribution defined by the accumulated score:\footnote{In Appendix~\ref{appendix:proof}, we justify the approximation error rate of Eq~\ref{eq:sbs}, which computes normalized probability on the subset $\cS$ instead of on the entire set.}
\begin{equation}
    \label{eq:sbs}
    P_{beam}(s^{1:t}) \propto \exp(\cE(s^{1:t})/\tau), \quad s^{1:t} \in \cS
\end{equation}
where the temperature $\tau$ is a hyperparameter to control the randomness in stochastic beam search; when $\tau\rightarrow 0$, stochastic beam search becomes the vanilla beam search algorithm. The reasoning beams $s^{1:t}$ can be sampled efficiently since $|\cS|=nk$ is a finite set. To enable fine-grained control of sampling randomness in decoding, we also introduce a hyperparameter $\alpha \in [0, 1]$ so that $\tau$ can decay step by step as $\tau \rightarrow \alpha\tau$. By annealing $\tau$ with $\alpha$, we can mitigate the error accumulation due to aggregated randomness throughout chaining, as discussed in Section ~\ref{sec:further}.

By incorporating controllable randomness, we not only achieve a more reliable single reasoning chain generation by setting randomness to be small, but also leverage multiple diverse reasoning chains with larger variance. Next, we introduce our constraint function $\cC(s^t, s^{1:t-1})$ that utilizes a self-evaluation scheme to improve the consistency of each reasoning step.

\subsection{Self-Evaluation as Correctness Control}
~\label{sec:eval}
Inspired by the recent success of self-evaluation~\citep{kadavath2022language, shinn2023reflexion, madaan2023selfrefine, paul2023refiner}, a scheme to prompt LLMs to evaluate their own generation, we use LLMs to judge the correctness of $s^t$ based on $s^{1:t-1}$. Specifically, the evaluation and generation models use the same backend LLM with different prompts, which consist of few-shot exemplars. We follow previous works of CoT~\citep{wei2022chain} or PAL~\citep{gao2022pal} to formulate the generation prompts. To construct the in-context exemplars $\mathsf{prompt}_{\mathcal{C}}$ for the self-evaluation LLM $\mathrm{LM}_{\mathcal{C}}$, we provide stepwise evaluation examples (as question answering with rationales) in each instance. Inspired by~\citet{kadavath2022language}, we design $\mathsf{prompt}_{\mathcal{C}}$ in the form of multiple-choice questioning (as shown in Figure~\ref{fig:framework}) to better calibrate the model predictions, where we adopt the token-level probability of option $\mathsf{A}$ to represent the correctness score as:
\begin{equation}
    \mathcal{C}(s^t) = P_{\mathrm{LM}_{\mathcal{C}}}(\mathsf{A}\mid \mathsf{prompt}_{\mathcal{C}}, Q, s^{1:t})
    \label{eq:conf}
\end{equation}

\section{Experiments}
\subsection{Setup}
We present and analyze the results of our self-evaluation guided beam search with different LLM backbones on various reasoning benchmarks. Implementation details including prompt examples and hyperparameter setup can be found in Appendix~\ref{appendix:prompt}.

\paragraph{Benchmarks.}
We evaluate the effectiveness of our approach across three types of reasoning tasks: (1) Arithmetic Reasoning on five math word problem benchmarks, including GSM8K \citep{cobbe2021training} on math word problems, AQuA \citep{ling2017program} on algebraic word problems, SVAMP \citep{patel2021nlp} on structure variations of math word problems, ASDiv \citep{miao2021diverse} on diverse math word problems, and TabMWP \citep{lu2022dynamic} on tabular math word problems; (2) Symbolic Reasoning on BIG-Bench \citep{srivastava2022beyond}, which involves \texttt{Date Understanding} of context-based date inferring and \texttt{Object Counting} of enumerating and counting objects of different types; (3) Commonsense Reasoning on three benchmarks, including CommonsenseQA \citep{talmor2018commonsenseqa} of commonsense questions that require prior world knowledge to answer, StrategyQA \citep{geva2021did} of questions that require a multi-hop strategy to answer, and \texttt{Sports Understanding} from BIG-Bench \citep{srivastava2022beyond} to determine whether a sports-related sentence is plausible.

\paragraph{Baselines.}
We consider two types of baselines: (1) Chain-of-Thought (CoT)~\citep{wei2022chain} prompting in free-text reasoning and (2) Program-Aided Language models (PAL)~\citep{ling2017program} and Program-of-Thought (PoT)~\citep{chen2022program} prompting in program-aided reasoning. We also include their self-consistency~\citep{wang2022selfconsistency} variants for multiple-chain reasoning. For generation, we follow the few-shot exemplars of baselines. For self-evaluation, we manually create a set of few-shot exemplars based on the baseline outputs on corresponding training data. We formulate self-evaluation as a task of multiple-choice question answering, following \citet{kadavath2022language}. For baselines, we represent the cost as the number of generated tokens. For the cost of our method, we also include the number of additional input tokens in self-evaluation for a fair comparison.

\paragraph{Backboned LLMs.}
We assess our approach on closed- and open-source LLMs using both PAL and CoT prompting. For closed-source LLMs, we choose Codex (\texttt{code-davinci-002})~\citep{chen2021evaluating} to report and compare the results on all datasets. We use Llama-2 (\texttt{13B})~\citep{DBLP:journals/corr/abs-2307-09288} as our open-source LLM to conduct cost--performance analysis on different datasets.

\subsection{Main Results}
\paragraph{Arithmetic and Symbolic Reasoning.}
Table~\ref{table:arithmetic} shows the results for arithmetic and symbolic reasoning. Our method achieves significant performance improvements on most benchmarks in both single- ($\tau=0$) and multiple-chain scenarios, with PAL as the baseline. For arithmetic reasoning, we observe absolute increases in accuracy of $5.3\%$, $8.3\%$, and $0.7\%$ on GSM8K, AQuA, and SVAMP, respectively. One possible explanation for this discrepancy in improvements is the reduced diversity in LLM generations due to higher confidence in predictions, as evidenced by the relatively high performance on the tasks. This highlights the importance of incorporating controllable randomness into the candidate generations to expand the search space for self-evaluation guided decoding. We further explore the impact of generation diversity by varying the temperature $\gamma$ in Section~\ref{sec:further}.

For symbolic reasoning, our approach also leads to consistent performance gains. However, when the baseline itself performs well on the task (\textit{e.g.}, $96.7\%$ on \texttt{Object Counting}), our approach may not yield substantial improvement. This can also be attributed to the constrained accessible search space for self-evaluation guidance to refine the generation distribution. This limit suggests the inherent deficiency in our LLM-based prompting method that it becomes increasingly challenging to calibrate the generation direction when the model $\mathrm{LM}_{\mathcal{G}}$ is more confident in its predictions. In other words, the high baseline performance usually indicates lower diversity in the LLM generations even with a large temperature $\gamma$, resulting in a limited accessible search space for the model to find a better solution.

\begin{table}[tbp]
  \caption{Result comparison (accuracy $\%$) on arithmetic and symbolic reasoning tasks. The best result is in \textbf{bold} and the lowest cost is in \textcolor{ForestGreen}{green}. We report methods all with Codex backbone for a fair comparison. Similar to \citet{huang2022large}, Diverse~\citep{li2022advance} fine-tune task-specific verifiers to apply weights on samples in self-consistency (SC). Other fine-tuning methods include reward-based supervision~\citep{uesato2022solving} and content-specific training~\citep{lewkowycz2022solving}. We also report the number of tokens (\cost{\# Tokens}) on GSM8K to compare the costs of different methods.}
  \label{table:arithmetic}
  \centering
  \small
  \begin{tabular}{lcccccccc}
    \toprule
    \multirow{2}{*}{Approach} & \multicolumn{6}{c}{Arithmetic} & \multicolumn{2}{c}{Symbolic} \\
    & GSM8K & \cost{\# Tokens} & AQuA & SVAMP & ASDiv & TabMWP & \texttt{DATE} & \texttt{OBJECT}  \\
    \midrule
    \multicolumn{9}{c}{{single reasoning chain}} \\
    \midrule
    CoT & $65.6$ & \cost{$\textcolor{ForestGreen}{0.2k}$} & $45.3$ & $74.8$ & $76.9$ & $65.2$ & $64.8$ & $73.0$ \\
    PoT & $71.6$ & \cost{$-$} & $54.1$ & $85.2$ & $-$ & $73.2$ & $-$ & $-$ \\
    PAL & $72.0$ & \cost{$0.3k$} & $-$ & $79.4$ & $79.6$ & $-$ & $76.2$ & $96.7$ \\
    Ours-PAL & $80.2$ & \cost{$27.7k$} & $55.9$ & $89.6$ & $84.9$ & $79.1$ & $\mathbf{78.6}$ & $\mathbf{96.8}$ \\
    \midrule
    \multicolumn{9}{c}{{multiple reasoning chains}} \\
    \midrule
    CoT, SC & $78.0$ & \cost{$5.3k$} & $52.0$ & $86.8$ & $-$ & $75.4$ & $-$ & $-$ \\
    CoT, Diverse & $82.3$ & \cost{$-$} & $-$ & $87.0$ & $\mathbf{88.7}$ & $-$ & $-$ & $-$ \\
    PoT, SC & $80.0$ & \cost{$-$} & $58.6$ & $89.1$ & $-$ & $\mathbf{81.8}$ & $-$ & $-$ \\
    PAL, SC & $80.4$ &\cost{$7.4k$} & $-$ & $-$ & $-$ & $-$ & $-$ & $-$ \\
    Ours-PAL & $\mathbf{85.5}$ & \cost{$550.0k$} & $\mathbf{64.2}$ & $\mathbf{90.3}$ & $85.8$ & $80.9$ & $-$ & $-$ \\
    \bottomrule
  \end{tabular}
\end{table}

\begin{table}[tbp]
  \caption{Result comparison (accuracy $\%$) on commonsense reasoning tasks, with Codex backbone. Here we only report results in the single reasoning chain scenario following~\citet{wei2022chain}. We report \cost{\# Tokens} on StrategyQA for cost comparison.}
  \label{table:commonsense}
  \centering
  \small
  \begin{tabular}{lccccc}
    \toprule
    Approach & StrategyQA & \cost{\# Tokens} & CommonsenseQA & \texttt{Sports} \\
    \midrule
    CoT & $73.2$ & \cost{$\textcolor{ForestGreen}{0.06k}$} & $77.9$ & $\mathbf{98.5}$ \\
    Ours-CoT & $\mathbf{77.2}$ & \cost{$11.6k$} & $\mathbf{78.6}$ & $98.4$ \\
    \midrule
    Human & $87.0$ & \cost{$-$} & $88.9$ & $-$ \\
    \bottomrule
  \end{tabular}
\end{table}

\paragraph{Commonsense Reasoning.}
Table~\ref{table:commonsense} compares methods using CoT prompting in commonsense reasoning. Our approach shows consistent performance improvements across several tasks. For example, on StrategyQA, we achieve an accuracy of $77.2\%$ compared with $73.2\%$ of the baseline. Likewise, the performance of our approach is constrained by the low diversity of LLM generations on \texttt{Sporting Understanding}, as we observe on \texttt{Object Counting} in symbolic reasoning.

\paragraph{Computational Cost Overhead.}
Despite the fact that our approach achieves significant improvement on various benchmarks, we observe an overhead of computational cost compared with the corresponding baselines. For example, the single-chain version of our approach using PAL costs about $3$ times more than the self-consistency baseline on GSM8K. As detailed in Appendix~\ref{appendix:prompt}, this is due to a relatively large hyperparameter -- the number of rollouts per beam $n$ -- which we set as $16$ for better performance. To strike a balance between performance and cost and present a complete picture, we adopt $n=2$ and conduct cost--performance analysis on our approach in Section~\ref{sec:cost}.

\subsection{Cost Analysis}~\label{sec:cost}
Table~\ref{table:cost} compares the baseline and our approach under comparable computational budgets (measured in \cost{\# Tokens}). Our method consistently outperforms self-consistency on the arithmetic reasoning tasks even when benchmarked for relatively less computational cost. For example, we achieve $46.1\%$ on GSM8K with a cost of $12.6k$ tokens, compared with the accuracy of $41.8\%$ of self-consistency which costs $13.9k$ tokens.
Figure~\ref{fig:cost-gsm8k} further illustrates the cost-efficiency of our approach on GSM8K using different prompting methods under various levels of costs. Our approach significantly outperforms the corresponding equal-cost baseline especially when the computational budget increases, indicating the improvement in the performance upper bound brought by our method.

However, our approach lags behind the CoT baseline on commonsense reasoning. This implies the limitation of our method when applied to shorter reasoning chains, \textit{i.e.}, decreasing the number of intermediate steps weakens the effect of stepwise self-evaluation in beam search in reducing error accumulation. On the other hand, self-consistency can directly improve performance through instance-level aggregation without additional cost for self-evaluation. We analyze how our method benefits longer reasoning chains on different tasks in Section~\ref{sec:further}.

\begin{table}[tbp]
  \centering
  \caption{Cost (\cost{\# Tokens}) and result (accuracy $\%$) comparison on arithmetic and commonsense reasoning tasks. We base our experiments on Llama-2 (13B) since Codex is not available. We show the results of the baseline and our method both in the multiple-chain scenario for a fair comparison. Here we use PAL and CoT prompting for arithmetic and commonsense reasoning, respectively.}
  \label{table:cost}
  \small
  \begin{tabular}{lccccccc}
    \toprule
    \multirow{2}{*}{Approach} & \multicolumn{5}{c}{Arithmetic (PAL)} & \multicolumn{2}{c}{Commonsense (CoT)} \\
     & GSM8K & AQuA & SVAMP & ASDiv & TabMWP & StrategyQA & CommonsenseQA \\
    \midrule
    Baseline & $41.8$ & $30.7$ & $71.2$ & $66.2$ & $43.7$ & $\mathbf{71.0}$ & $\mathbf{74.4}$ \\
    \rowcolor[RGB]{243, 243, 243} \# Tokens & $13.9k$ & $6.6k$ & $5.9k$ & $2.7k$ & $1.9k$ & $2.7k$ & $1.2k$ \\
    \midrule
    Ours & $\mathbf{46.1}$ & $\mathbf{31.5}$ & $\mathbf{74.6}$ & $\mathbf{67.7}$ & $\mathbf{49.6}$ & $70.6$ & $74.0$ \\
    \rowcolor[RGB]{243, 243, 243} \# Tokens & $\textcolor{ForestGreen}{12.6k}$ & $\textcolor{ForestGreen}{6.0k}$ & $\textcolor{ForestGreen}{5.0k}$ & $\textcolor{ForestGreen}{2.5k}$ & $\textcolor{ForestGreen}{1.2k}$ & $\textcolor{ForestGreen}{2.6k}$ & $1.2k$ \\
    \bottomrule
  \end{tabular}
\end{table}

\begin{figure*}

\begin{minipage}{\textwidth}
    \centering
    \begin{subfigure}{0.45\textwidth}
        \centering
        \includegraphics[width=\textwidth]{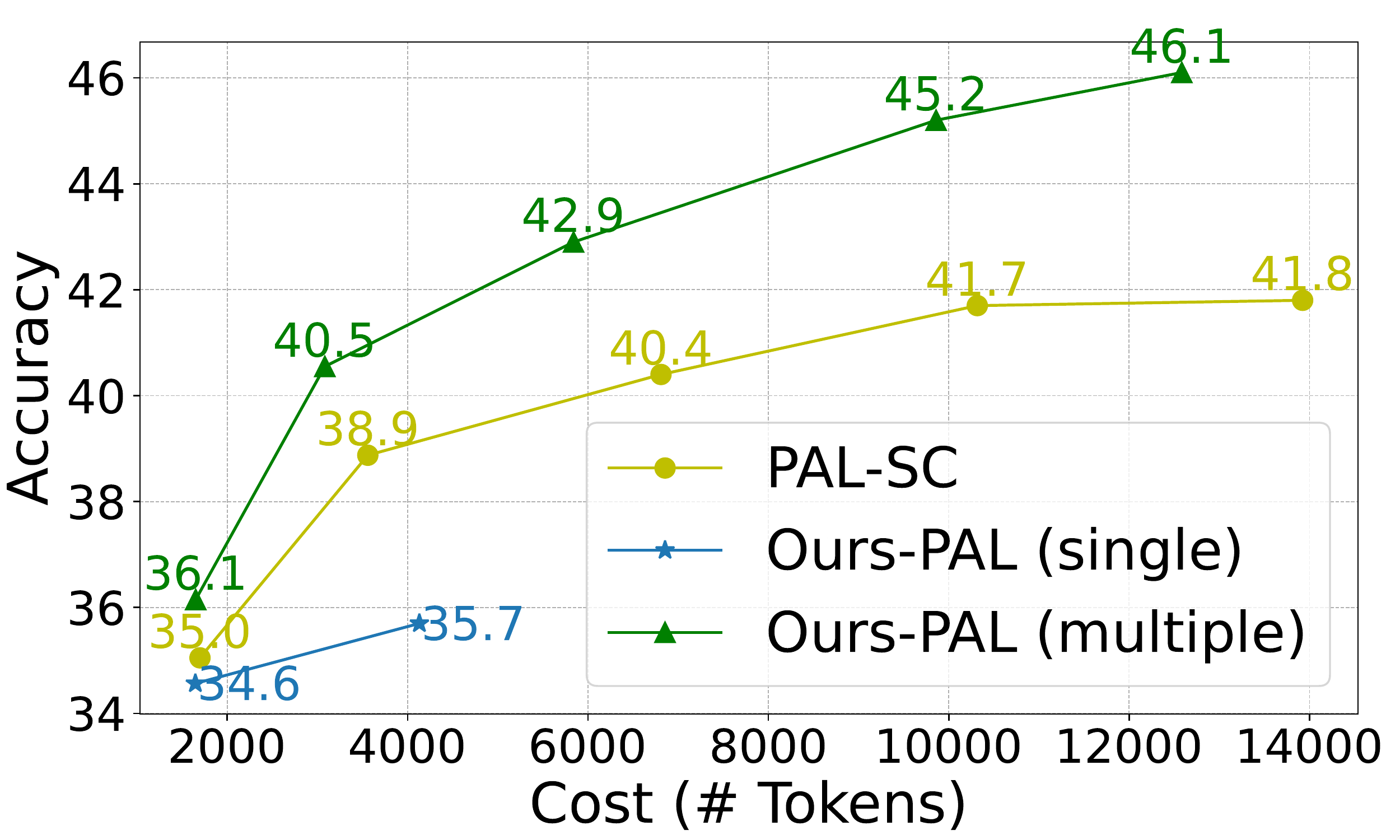}
        \caption{PAL Prompting Methods on GSM8K}
        \label{fig:cost-gsm8k-pal}
    \end{subfigure}
    \begin{subfigure}{0.45\textwidth}
        \centering
        \includegraphics[width=\textwidth]{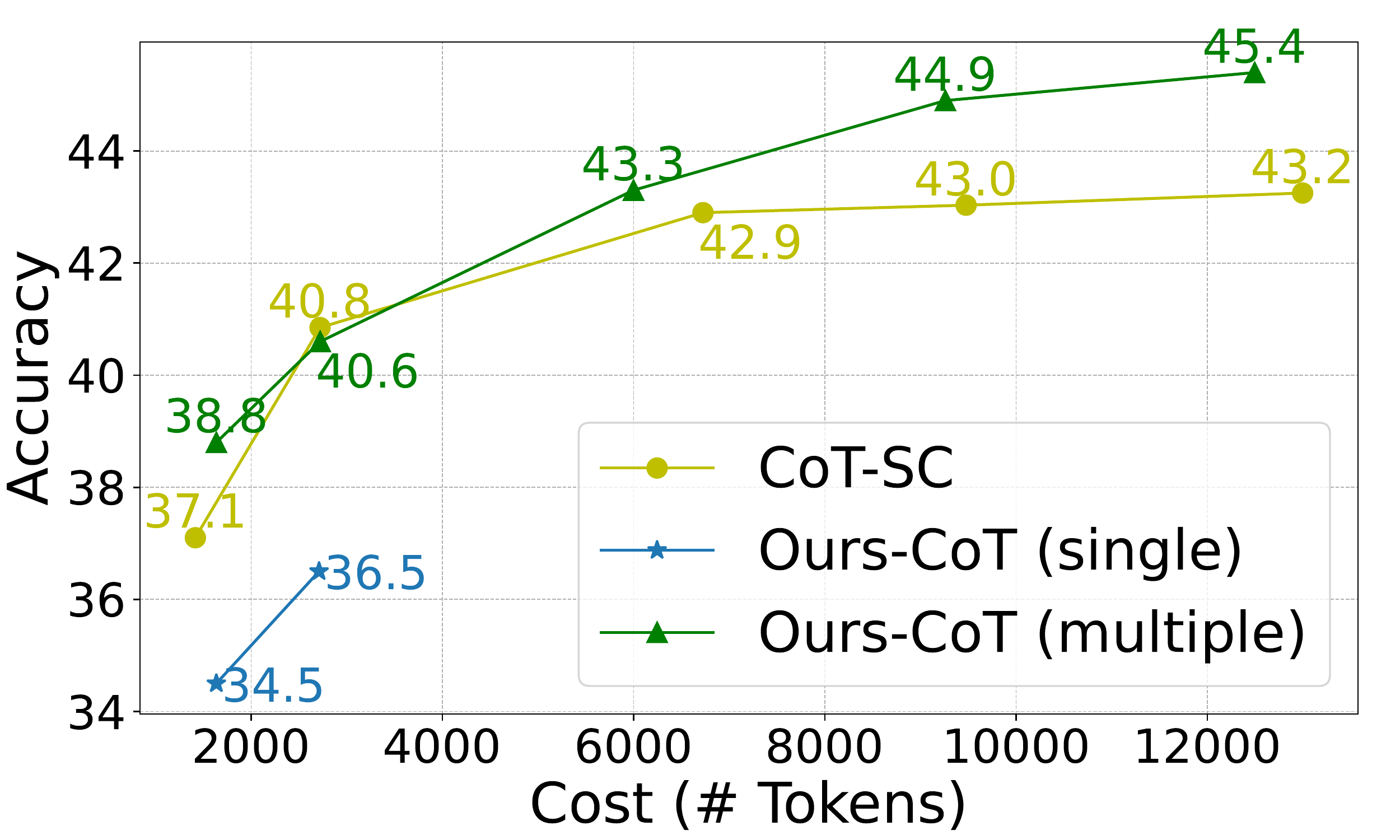}
        \caption{CoT Prompting Methods on GSM8K}
        \label{fig:cost-gsm8k-cot}
    \end{subfigure}
    \captionof{figure}{Accuracy curves on GSM8K of different methods with the change of the cost. We conduct the performance comparison using both PAL and CoT prompting with Llama-2 (\texttt{13B}) backbone.}
    \label{fig:cost-gsm8k}
\end{minipage}

\vspace{10pt}

\begin{minipage}{\textwidth}
    \centering
    \begin{subfigure}{\textwidth}
        \centering
        \includegraphics[width=\textwidth]{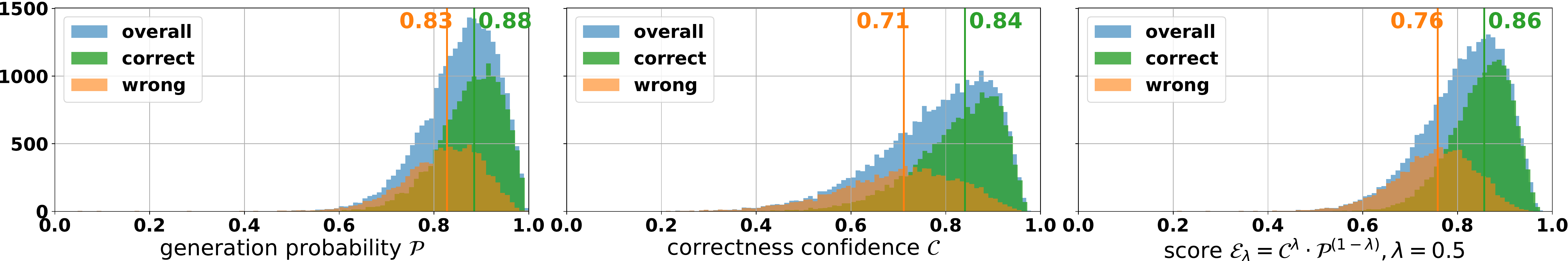}
        \caption{Score distribution of PAL baseline predictions on GSM8K.}
        \label{fig:distribution-gsm8k}
    \end{subfigure}
    \begin{subfigure}{\textwidth}
        \centering
        \includegraphics[width=\textwidth]{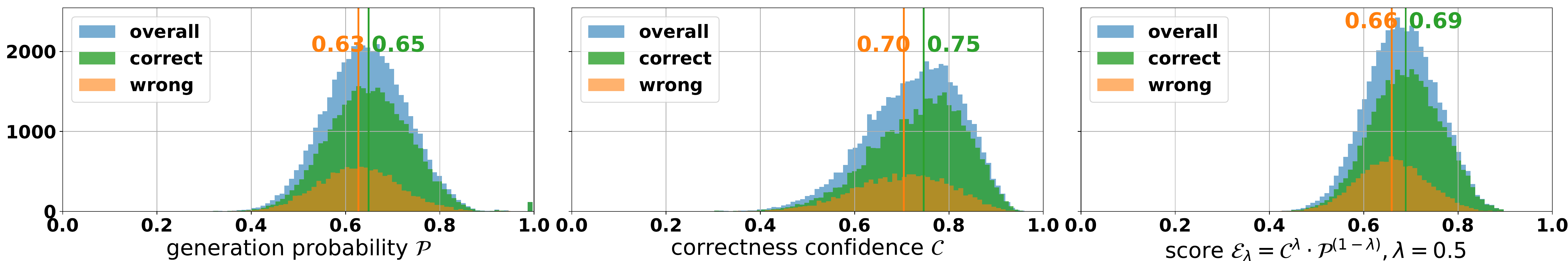}
        \caption{Score distribution of CoT baseline predictions on StrategyQA.}
        \label{fig:distribution-strategyqa}
    \end{subfigure}
    \captionof{figure}{Distributions of the self-evaluation score and its components (\textit{i.e.}, generation confidence $\mathcal{P}$ and correctness confidence $\mathcal{C}$) on correct/incorrect baseline predictions. We highlight the median scores of the positive and negative cases using lines of the same colors respectively.}
    \label{fig:distribution}
\end{minipage}

\end{figure*}

\subsection{Further Analysis}~\label{sec:further}
We now provide a detailed analysis of why our method achieves significant gains.

\paragraph{Generation and Self-evaluation Calibration.}
We investigate the distributions of generation confidence (\textit{i.e.}, the LM probability $\mathcal{P}$) and correctness confidence $\mathcal{C}$ in our self-evaluation score $\mathcal{E}$. By comparing the score distributions for correct and wrong predictions, we aim to gain an intuitive understanding of whether these confidence scores are reliable. Figure~\ref{fig:distribution} shows different score distributions on \colorbox[RGB]{217,234,211}{\makebox(25,5){correct}} and \colorbox[RGB]{253,229,205}{\makebox(23,5){wrong}} baseline predictions. The difference in distribution between the two prediction sets is substantial for arithmetic reasoning, but negligible for commonsense reasoning. Notably, in both instances, correctness confidence is more discriminatory than generation confidence. 

To achieve a balance between these two confidence scores, we utilize a tunable hyperparameter $\lambda$, setting $\lambda = 0.5$ for all datasets. Nevertheless, varying its value can lead to distinct outcomes. For instance, when setting $\lambda$ to $1$ ($\mathcal{E}=\mathcal{C}$) or $0$ ($\mathcal{E}=\mathcal{P}$), the performance on GSM8K decreases from $80.2\%$ to $74.5\%$ and $77.1\%$, respectively. This indicates that both scores play a crucial role in our final performance. A more comprehensive analysis of $\lambda$ can be found in Appendix~\ref{appendix:score}.

\begin{table}[tbp]
    \centering
    \caption{Absolute accuracy (in $\%$) increases on instances of different complexity determined by the length of reasoning chains (represented as \# Steps).}
    \begin{subtable}{0.49\textwidth}
        \centering
        \begin{tabular}{lcccc}
           \toprule
           \multicolumn{5}{c}{GSM8K} \\
           \# Steps &  \# Ins.  &  PAL  &  Ours & $\Delta$Accu. \\
           \midrule
           $< 7$ & $437$ & $85.8$ & $91.3$ & $+5.49$ \\
           $\in (7, 9]$ & $524$ & $74.8$ & $82.6$ & $+7.82$ \\
           $\geq 9$ & $358$ & $72.9$ & $82.6$ & $+9.78$ \\
           \bottomrule
        \end{tabular}
        \label{table:gsm8k-len}
    \end{subtable}
    \hfill
    \begin{subtable}{0.49\textwidth}
        \centering
        \begin{tabular}{lcccc}
           \toprule
           \multicolumn{5}{c}{StrategyQA} \\
           \# Steps &  \# Ins.  &  CoT  &  Ours & $\Delta$Accu. \\
           \midrule
           $< 4$ & $637$ & $84.6$ & $84.9$ & $+0.31$ \\
           $\in [4, 5)$ & $1,301$ & $78.6$ & $79.1$ & $+0.46$ \\
           $\geq 5$ & $351$ & $68.4$ & $71.8$ & $+3.42$ \\
           \bottomrule
        \end{tabular}
        \label{table:strategyqa-len}
    \end{subtable}
    \label{table:complexity-accu}
\end{table}

\begin{figure}[tbp]
    \centering
    \begin{subfigure}{0.35\textwidth}
        \centering
        \includegraphics[width=\textwidth]{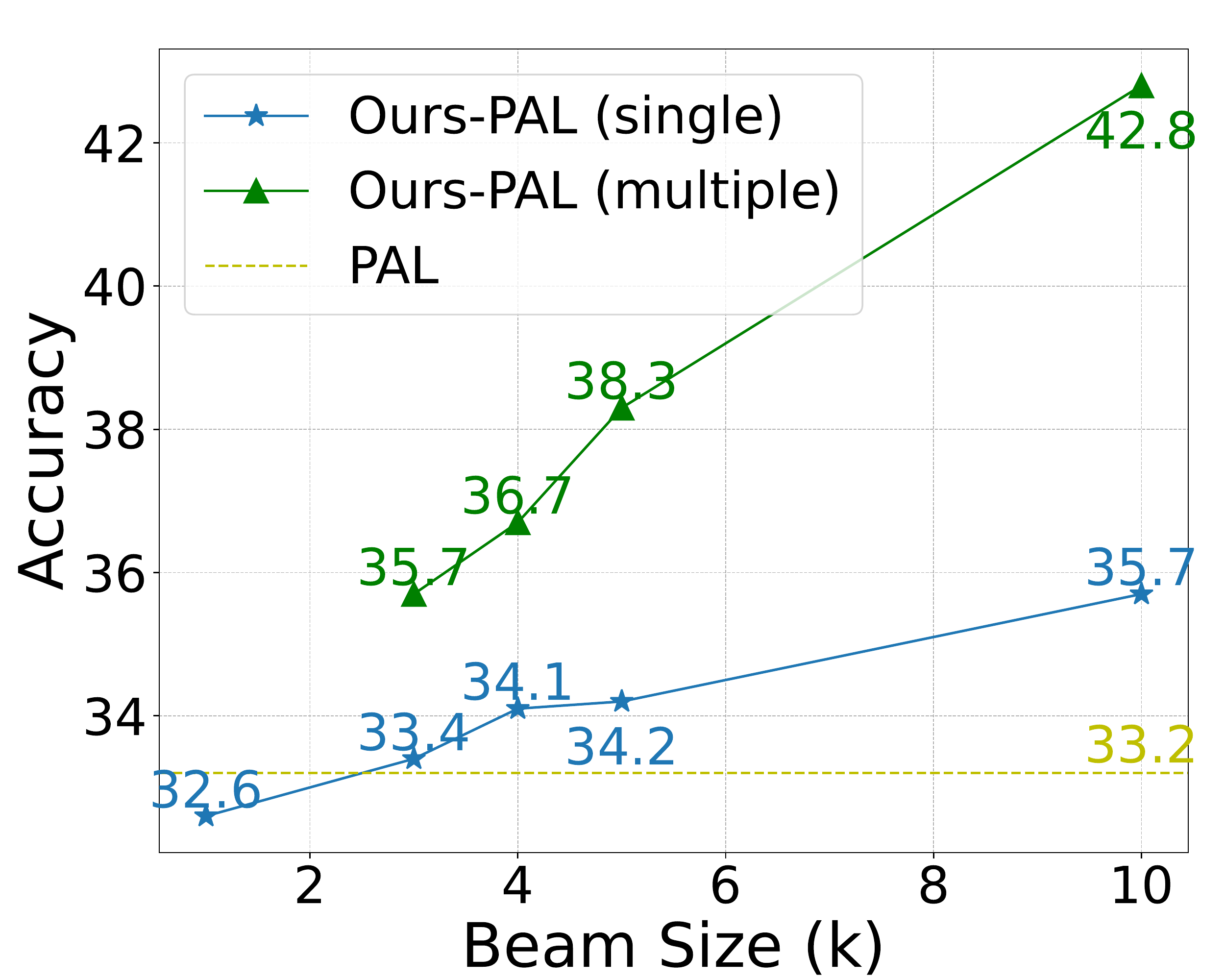}
        \caption{effect of beam size}
        \label{fig:k-performance}
    \end{subfigure}
    \begin{subfigure}{0.64\textwidth}
        \centering
        \includegraphics[width=\textwidth]{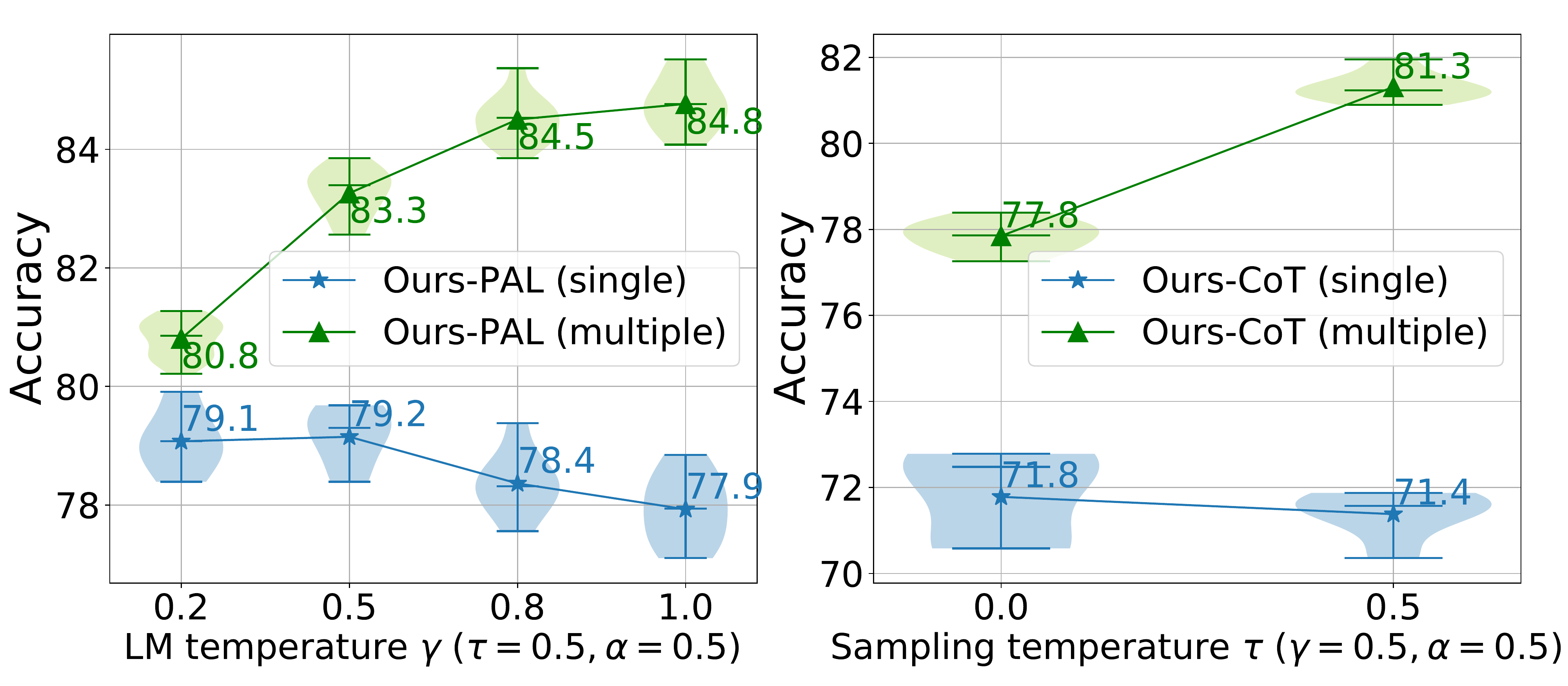}
        \caption{effect of generation and sampling diversity}
        \label{fig:temperature-performance}
    \end{subfigure}
    \caption{Accuracy curves and distributions of our approach on GSM8K with different hyperparameter settings: (a) Changes in performance (Llama-2 backboned) when the beam size $k$ varies. Methods of the same $k$ have equal computational costs; (b) Accuracy distributions (Codex backboned) with different generation temperature $\gamma$ and sampling temperature $\tau$ (with decay ratio $\alpha$).}
    \label{fig:curves}
\end{figure}

\paragraph{Reasoning Complexity.} 
~\label{appendix:difficulty}
We investigate if our approach is more beneficial for instances needing more reasoning steps. Table~\ref{table:complexity-accu} shows that performance gains (in absolute accuracy $\%$ increase) increase as reasoning chains become longer on both GSM8K and StrategyQA. Notably, the improvement on StrategyQA primarily comes from improvements in longer reasoning chains, showcasing the effectiveness of our method in navigating lengthy and intricate reasoning chains.

\paragraph{Hyperparameters in Stochastic Beam Search.} We examine the significance of hyperparameters associated with stochastic beam search, including the beam size $k$ and the temperatures $\gamma$ and $\tau$ controlling the generation and sampling diversity, respectively.

Figure~\ref{fig:k-performance} shows the trend of performance improvement with the increase of beam size $k$. Notably, our beam search approach inherently enables majority voting on the final beam without additional cost, resulting in a more significant performance improvement in the multiple-chain reasoning when the beam size is larger (\textit{e.g.}, $42.8\%$ compared with $35.7\%$ when $k=10$).

For generation and sampling diversity, it is clear that more diversity resulting from higher temperatures generally leads to a decline in performance when only considering a single reasoning chain. However, diversity significantly benefits majority voting on multiple reasoning chains~\footnote{In this study, we did not explore higher generation temperatures (\textit{i.e.}, $\gamma > 1.0$) since this hyperparameter is limited to $1.0$ in the OpenAI API.}. This benefit comes from the improved coverage of the plausible generations and the ensembling effect. Nevertheless, one can adjust the sampling-related parameters (\textit{i.e.}, $\tau$ and $\alpha$) to incorporate more randomness into the generations. In practice, we find that a moderate temperature decay (\textit{e.g.}, $\alpha=0.5$) results in improved performance. We conduct further analysis of the effect of sampling diversity in Appendix~\ref{appendix:score}.

\begin{figure}[t]
    \centering
    \begin{subfigure}{\textwidth}
        \centering
        \includegraphics[width=\textwidth]{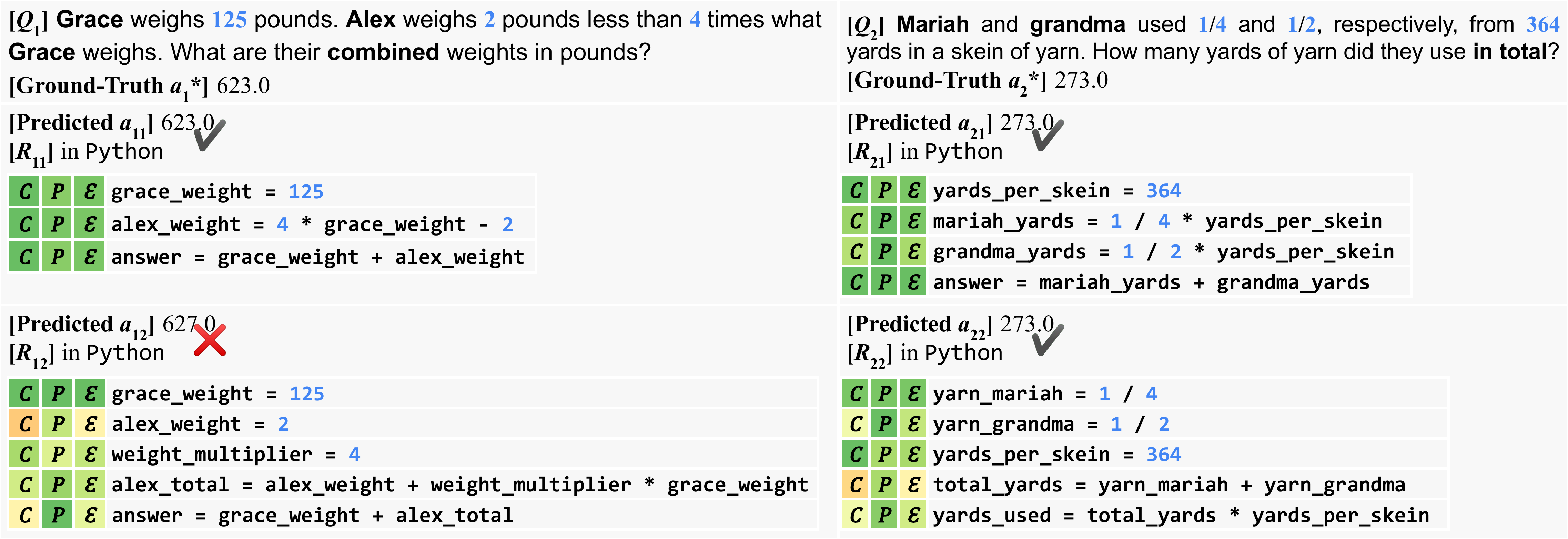}
        \caption{Examples of self-evaluation score distribution of different predictions on the GSM8K dataset. }
        \label{fig:gsm-case-study}
    \end{subfigure}
    \begin{subfigure}{\textwidth}
        \centering
        \includegraphics[width=\textwidth]{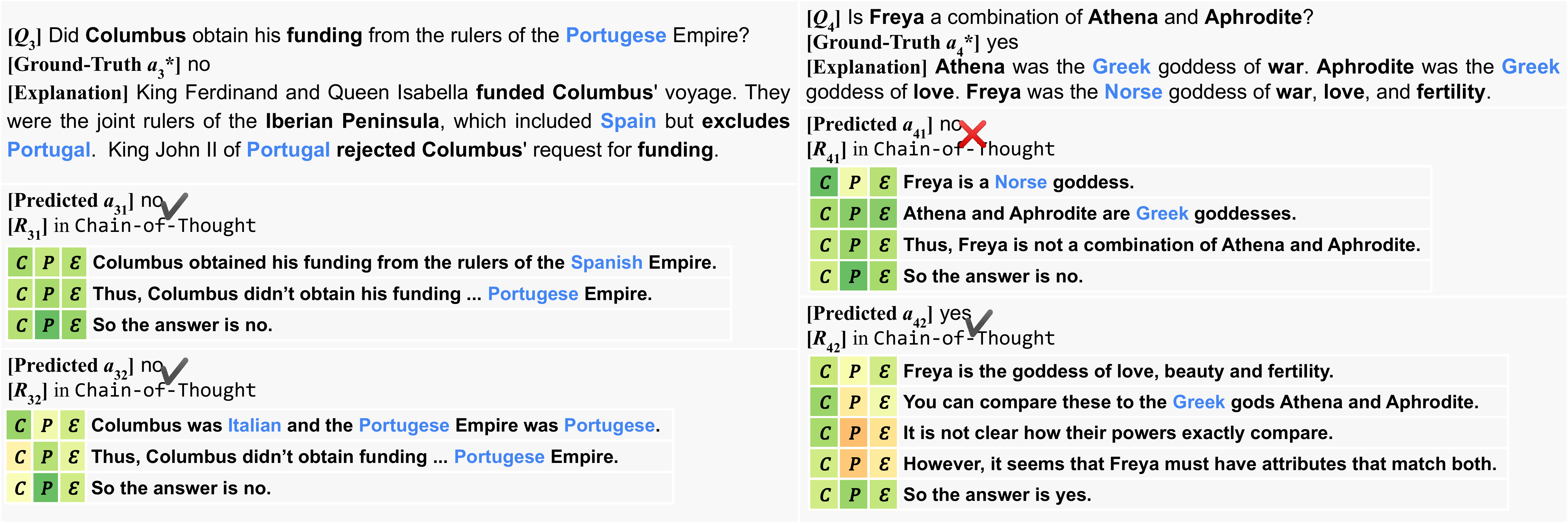}
        \caption{Examples of self-evaluation score distribution of different predictions on the StrategyQA dataset. We also provide explanations corresponding to the ground-truth answers for reference.}
        \label{fig:sqa-case-study}
    \end{subfigure}
    \caption{Comparisons among predictions of high and low self-evaluation scores on arithmetic (\ref{fig:gsm-case-study} for GSM8K) and commonsense (\ref{fig:sqa-case-study} for StrategyQA) reasoning tasks. Scores from low to high are visualized from \colorbox[RGB]{253,229,205}{\makebox(25,6){orange}} ($0.0$), \colorbox[RGB]{255,242,204}{\makebox(26,6){yellow}} ($0.4$), to \colorbox[RGB]{217,234,211}{\makebox(24,6){green}} ($1.0$). Here $\mathcal{C}, \mathcal{P}$, and $\mathcal{E}$ represent the evaluation confidence, the generation confidence, and their combination as the final score, respectively.}
    \label{fig:case-study}
\end{figure}

\paragraph{Qualitative Analysis.}
We examine particular instances to investigate the behavior of correctness confidence scores $\mathcal{C}$ and generation probabilities $\mathcal{P}$ in different scenarios. From the comparison shown in Figure~\ref{fig:case-study}, we have the following main observations:

$\bullet$ In general, the correctness confidence is more effective at identifying logical errors, taking into account the accumulated mistakes from prior steps, while the generation probability focuses more on text perplexity as the confidence of the generation LLM. 

$\bullet$ When comparing arithmetic and commonsense tasks, LLMs exhibit greater confidence in dealing with structured and objective reasoning chains such as problems in GSM8K, for both generation and self-evaluation, as opposed to reasoning chains in StrategyQA.

$\bullet$ Reasoning chains that appear logically plausible can achieve high correctness confidence scores but still result in incorrect answers, as demonstrated in $R_{41}$ in Figure~\ref{fig:sqa-case-study}. Moreover, the correctness confidence can be influenced by minor details (\textit{e.g.}, imperfect variable naming in PAL reasoning) and assign low scores regardless of the correctness of the final answers as shown in $R_{22}$ in Figure~\ref{fig:gsm-case-study}.

$\bullet$ Incoherence due to a sudden jump in reasoning (\textit{e.g.}, $R_{32}$ in Figure~\ref{fig:sqa-case-study}) can lead to low correctness confidence. Additionally, the correctness confidence tends to be lower when the generation LLM makes a probability statement with less certainty, such as ``it seems'' as illustrated by $R_{42}$ in Figure~\ref{fig:sqa-case-study}.

\section{Related Work}
\paragraph{Reasoning Formulation.}
Several studies have attempted to better formulate the reasoning problem. One approach is to generate rationales to enhance model interpretability~\citep{zhou2020interpretable, wiegreffe2021teach, wiegreffe2021measuring}. Recently, the focus has shifted towards decomposing the reasoning process into intermediate steps before reaching the final answer~\citep{wei2022chain, zhou2022least, gao2022pal, chen2022program}. Various decomposition techniques have been explored, such as question reduction~\citep{zhou2022least, yang2022seqzero}, iterative prompting~\citep{wang2022shepherd}, and chaining the steps~\citep{10.1145/3491102.3517582}. While incorporating intermediate reasoning steps has resulted in substantial performance improvements, errors or imperfections can accumulate, especially when the chains become longer~\citep{wu2016googles, guo2018long}.
As such, we utilize LLM self-evaluation as a stepwise criterion to improve the chaining process.

\paragraph{LLM Self-Evaluation.}
Recent research on LLM calibration shows that current LLMs' probabilistic predictions correspond well with actual token occurrence frequencies, leading to well-calibrated predictions for specific tasks~\citep{rae2021scaling, kadavath2022language, guo2017calibration, kadavath2022language, jiang2021how, DBLP:conf/iclr/KuhnGF23}. Notably, scaling model size plays a crucial role in enhancing calibration~\citep{rae2021scaling, wei2022emergent}. As LLMs exhibit good calibration, an increasing number of studies focus on prompting LLMs to perform self-evaluation as a means of verification~\citep{DBLP:conf/icml/ZhangYHLYF023, shinn2023reflexion, madaan2023selfrefine, paul2023refiner}. Self-evaluation provides an effective and efficient assessment method without requiring task-specific verifier fine-tuning, which typically involves additional annotations~\citep{li2022advance}. In contrast to existing works that refine generation results through instance-level self-evaluation, our approach applies self-evaluation results as a stepwise criterion to calibrate generation at a finer granularity. By focusing on step-by-step self-evaluation, our method enables fine-grained guided decoding, addressing the challenges associated with complex or lengthy reasoning. 

\paragraph{Decoding Strategies.}
A tradeoff typically exists between diversity and quality. Deterministic decoding methods such as greedy decoding and beam search~\citep{Jurafsky2009, graves2012sequence} often produce high-quality results but lack diversity~\citep{stahlberg-byrne-2019-nmt, meister2020best}. Temperature sampling~\citep{ACKLEY1985147}, top-$k$ sampling~\citep{fan-etal-2018-hierarchical}, and top-$p$ sampling~\citep{holtzman2019curious} are various techniques used to enhance diversity. The recent work of \textit{tree-of-thought}~\citep{DBLP:journals/corr/abs-2305-10601} explores different search algorithms such as breadth-first and depth-first searches tailored for different tasks. Differently, we propose a unified framework of stochastic beam search~\citep{caccia2018language, kool2019stochastic, meister2021conditional}, which combines beam search and temperature sampling to balance the quality--diversity trade-off in multi-step reasoning. 

\section{Discussion}
We have introduced a multi-step decoding method that calibrates reasoning with stepwise self-evaluation guidance via stochastic beam search for current large language models. The empirical success of our method across a broad range of tasks, from arithmetic and symbolic to commonsense reasoning, demonstrates its robustness and generalizability in various application areas. The significant performance gains of our method on long reasoning chains also highlight its applicability to other multi-step tasks, such as multi-hop question answering and more complex scenarios involving multi-modal understanding, reasoning, and planning. In future work, we will investigate how to utilize external tools to further enhance the calibration and explore its generalizability on other multi-step scenarios to deal with more complex information such as external knowledge and multimodalities.

\section*{Potential Impacts and Limitations}
We propose self-evaluation guided stochastic beam search to facilitate multi-step reasoning. 
%
However, our approach, based on stepwise self-evaluation guidance, has certain limitations. It requires access to LLM logits to calculate the self-evaluation score, restricting its applicability to more powerful LLMs, such as GPT-4, which do not provide token likelihoods. Plus, multi-step decoding inherently causes additional costs from candidate sampling and self-evaluation. For optimal balance between efficiency and cost, our approach is best applied to longer reasoning chains, where the cumulative effect of calibration across multiple steps can improve the overall performance more significantly.

\begin{ack}
The computational work for this article was partially performed on resources of the National Supercomputing Centre (NSCC), Singapore\footnote{\href{https://www.nscc.sg/}{https://www.nscc.sg/}}. We would like to thank Prof. Hwee Tou Ng for his insightful discussions that enhanced the depth and quality of our study.
\end{ack}


\small
\bibliography{references}
\bibliographystyle{acl_natbib}

\newpage
\appendix
\section{Appendix}
\subsection{Theoretical Analysis of Eq.~\ref{eq:sbs}}
~\label{appendix:proof}
In Eq.~\ref{eq:sbs}, we use $\mathcal{S}$ sampled from the language model $\mathrm{LM}_{\mathcal{G}}$ generations.
This is an approximation for sampling from the infinite set of all possible chaining paths.
And the finite set $\mathcal{S}$ is constructed based on the generation LM $P_{\mathrm{LM}_{\mathcal{G}}}$, which is different from our target distribution as shown in Eq.~\ref{eq:sbs}.

Specifically, denote the infinite set of all possible generated completions till the $t$-th step as $\mathcal{S}^{*}$,
we approximate sampling from 
$P_{beam}^{*}(s^{1:t}) = \frac{\exp{(\mathcal{E}(s^{1:t})/\tau)}}{\sum_{s^{1:t} \in \mathcal{S}^{*}}\exp{(\mathcal{E}(s^{1:t})/\tau)}}$ via 
$P_{beam}(s^{1:t}) = \frac{\exp{(\mathcal{E}(s^{1:t})/\tau)}}{\sum_{s^{1:t} \in \mathcal{S}}\exp{(\mathcal{E}(s^{1:t})/\tau)}}$,
where $\mathcal{S}$ is the approximation of $\mathcal{S}^{*}$ with $|\mathcal{S}| = nk = M \leq |\mathcal{S}^{*}|$.

Define the upper bound $\bc$ and the lower bound $\uc$ on each $\exp{(\mathcal{E}(s^{1:t})/\tau)}$ as $\bc \geq \exp{(\mathcal{E}(s^{1:t})/\tau)} \geq \uc$ for all $s^{1:t} \in \mathcal{S}^{*}$.
Define the ratio as $r = \bc / \uc$.
Note that $\uc \geq 1$ since $\mathcal{E}(s^{1:t})/\tau \geq 0$.
Thus, we can take $r \leq \bc$.

We now give the following proposition which shows that $|P_{beam}^{*}(s^{1:t}) - P_{beam}(s^{1:t})|$ decreases at the rate of $\Ocal(\frac{1 - M/|\mathcal{S}^{*}|}{M})$ toward $0$ as $M$ increases.
Note that as $M$ increases toward $|\mathcal{S}^{*}|$, the numerator $1 - M/|\mathcal{S}^{*}|$ decreases toward $0$ while the factor for the denominator $\frac{1}{M}$ also decreases.

\begin{proposition}
    For any $s^{1:t}$, the difference between $P_{beam}^{*}(s^{1:t})$ and $P_{beam}(s^{1:t})$ is bounded by 
    \begin{align}
        |P_{beam}^{*}(s^{1:t}) - P_{beam}(s^{1:t})| \leq r^2 \left( \frac{1 - M/|\mathcal{S}^{*}|}{M} \right)
    \end{align}
\end{proposition}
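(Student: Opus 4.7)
The plan is to treat the proposition as a direct computation that exploits a common numerator together with the two uniform envelopes $\bc$ and $\uc$. I would first introduce the shorthand $a = \exp(\mathcal{E}(s^{1:t})/\tau)$ for the term whose relative weight we are comparing, along with the two partition functions $Z = \sum_{s^{1:t} \in \mathcal{S}} \exp(\mathcal{E}(s^{1:t})/\tau)$ and $Z^{*} = \sum_{s^{1:t} \in \mathcal{S}^{*}} \exp(\mathcal{E}(s^{1:t})/\tau)$, so that $P_{beam}(s^{1:t}) = a/Z$ and $P_{beam}^{*}(s^{1:t}) = a/Z^{*}$. This cleanly isolates the dependence on the sampled subset entirely in the denominator.

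Next I would place the two quantities over a common denominator, obtaining
\begin{equation*}
|P_{beam}^{*}(s^{1:t}) - P_{beam}(s^{1:t})| \;=\; a \cdot \frac{|Z^{*} - Z|}{Z \cdot Z^{*}}.
\end{equation*}
Because $\mathcal{S} \subseteq \mathcal{S}^{*}$ with $|\mathcal{S}| = M$, the difference $Z^{*} - Z$ is a sum of exactly $|\mathcal{S}^{*}| - M$ nonnegative terms, each sandwiched between $\uc$ and $\bc$. The proof is then just a term-by-term application of the uniform bounds: $a \leq \bc$ for the outer factor, $Z^{*} - Z \leq (|\mathcal{S}^{*}| - M)\,\bc$ for the numerator, and $Z \geq M\uc$ together with $Z^{*} \geq |\mathcal{S}^{*}|\uc$ for the denominator. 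Combining these gives
\begin{equation*}
|P_{beam}^{*}(s^{1:t}) - P_{beam}(s^{1:t})| \;\leq\; \frac{\bc \cdot (|\mathcal{S}^{*}| - M)\,\bc}{M\uc \cdot |\mathcal{S}^{*}|\uc} \;=\; \frac{\bc^2}{\uc^2} \cdot \frac{|\mathcal{S}^{*}| - M}{M \cdot |\mathcal{S}^{*}|} \;=\; r^2 \cdot \frac{1 - M/|\mathcal{S}^{*}|}{M},
\end{equation*}
which is exactly the advertised bound.

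There is no serious obstacle to speak of: the argument is essentially bookkeeping with the envelopes $\bc$ and $\uc$. The one conceptual point worth flagging is that the proof implicitly treats $|\mathcal{S}^{*}|$ as finite so that $Z^{*}$ is well-defined; when $\mathcal{S}^{*}$ is genuinely infinite one should replace this by a summability hypothesis on $\exp(\mathcal{E}/\tau)$ (automatic whenever $\mathcal{E}$ is bounded on a finite-alphabet, bounded-length sequence space), in which case the factor $1 - M/|\mathcal{S}^{*}|$ may be taken as $1$ and the bound degrades gracefully to $r^2/M$, still matching the advertised $\mathcal{O}(1/M)$ rate.
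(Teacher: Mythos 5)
Your proposal is correct and follows essentially the same route as the paper's proof: put both probabilities over a common denominator, bound the numerator terms (including the restriction to $\mathcal{S}^{*}\setminus\mathcal{S}$) by $\bc$, bound the two partition sums below by $M\uc$ and $|\mathcal{S}^{*}|\uc$, and simplify to $r^2\bigl(\tfrac{1-M/|\mathcal{S}^{*}|}{M}\bigr)$. Your closing remark about summability when $\mathcal{S}^{*}$ is infinite is a reasonable clarification but does not change the argument.
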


\begin{proof}

We now prove the second statement by analyzing the absolute difference:
\begin{align}
& |P_{beam}^{*}(s^{1:t}) - P_{beam}(s^{1:t})| \\
= & \left| \frac{\exp{(\mathcal{E}(s^{1:t})/\tau)}}{\sum_{s^{1:t} \in \mathcal{S}^{*}}\exp{(\mathcal{E}(s^{1:t})/\tau)}} - \frac{\exp{(\mathcal{E}(s^{1:t})/\tau)}}{\sum_{s^{1:t} \in \mathcal{S}}\exp{(\mathcal{E}(s^{1:t})/\tau)}} \right| \\
= & \frac{\exp{(\mathcal{E}(s^{1:t})/\tau)}\left|\sum_{s^{1:t} \in \mathcal{S}^{*}}\exp{(\mathcal{E}(s^{1:t})/\tau)} - \sum_{s^{1:t} \in \mathcal{S}}\exp{(\mathcal{E}(s^{1:t})/\tau)}\right|}{\left(\sum_{s^{1:t} \in \mathcal{S}}\exp{(\mathcal{E}(s^{1:t})/\tau)}\sum_{s^{1:t} \in \mathcal{S}^{*}}\exp{(\mathcal{E}(s^{1:t})/\tau)}\right)} \\
= & \frac{\exp{(\mathcal{E}(s^{1:t})/\tau)}\left|\sum_{s^{1:t} \in \mathcal{S}^{*} \setminus \mathcal{S}}\exp{(\mathcal{E}(s^{1:t})/\tau)}\right|}{\left(\sum_{s^{1:t} \in \mathcal{S}}\exp{(\mathcal{E}(s^{1:t})/\tau)}\right)\sum_{s^{1:t} \in \mathcal{S}^{*}}\exp{(\mathcal{E}(s^{1:t})/\tau)}}
\end{align}
Since $\exp{(\mathcal{E}(s^{1:t})/\tau)}$ is nonnegative, using the upper bound on each $\exp{(\mathcal{E}(s^{1:t})/\tau)}$, we have:
\begin{align}
    |P_{beam}^{*}(s^{1:t}) - P_{beam}(s^{1:t})| \leq 
    \frac{\bc^2(|\mathcal{S}^{*}|-M)}{\left(\sum_{s^{1:t} \in \mathcal{S}}\exp{(\mathcal{E}(s^{1:t})/\tau)}\right)\sum_{s^{1:t} \in \mathcal{S}^{*}}\exp{(\mathcal{E}(s^{1:t})/\tau)}}
\end{align}
Similarly, using the lower bound on each $\exp{(\mathcal{E}(s^{1:t})/\tau)}$,
\begin{align}
    |P_{beam}^{*}(s^{1:t}) - P_{beam}(s^{1:t})| \leq  \frac{\bc^2(|\mathcal{S}^{*}| - M)}{\uc^2|\mathcal{S}^{*}|M} = r^2 \left( \frac{1 - M/|\mathcal{S}^{*}|}{M} \right)
\end{align}
    
\end{proof}

\newpage

\subsection{Extended Experiments}~\label{appendix:score}

\begin{figure}[h]
    \centering
    \includegraphics[width=\textwidth]{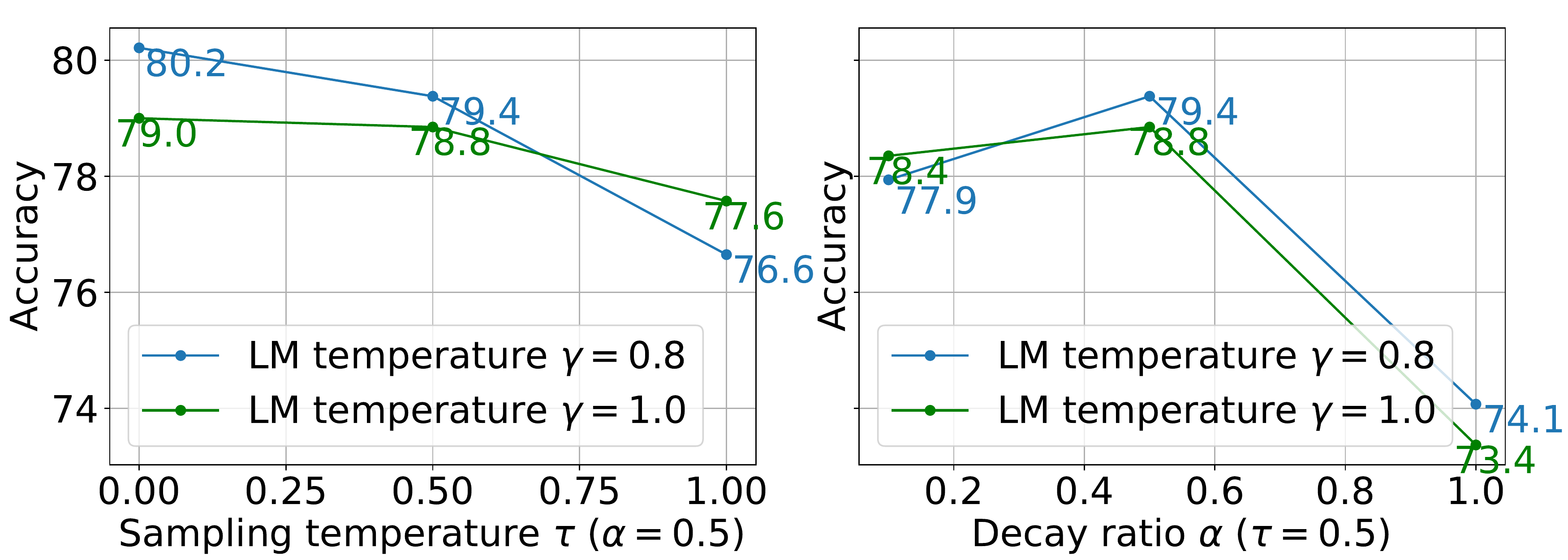}
    \caption{Accuracy curves with different sampling diversity. The two plots show the changes in performance on GSM8K when the sampling temperature $\tau$ and its decay ratio $\alpha$ vary, respectively.}
    \label{fig:tp-decay}
\end{figure}

\paragraph{Sampling Diversity.}
In accordance with Figure~\ref{fig:temperature-performance}, we observe similar results when ablating the sampling hyperparameters $\tau$ and $\alpha$ for the single reasoning chain case, as shown in Figure~\ref{fig:tp-decay}. Increasing $\tau$ and $\alpha$ generally adds more diversity to the decoding process, but excessive randomness negatively impacts the performance of the single-chain decoding. Generally, a moderate temperature decay results in improved performance. Therefore, we set $\alpha=0.5$ throughout our experiments for simplicity and only tune $\tau$ for randomness control.

\begin{figure}[h]
    \centering
    \begin{subfigure}{0.49\textwidth}
        \centering
        \includegraphics[width=\textwidth]{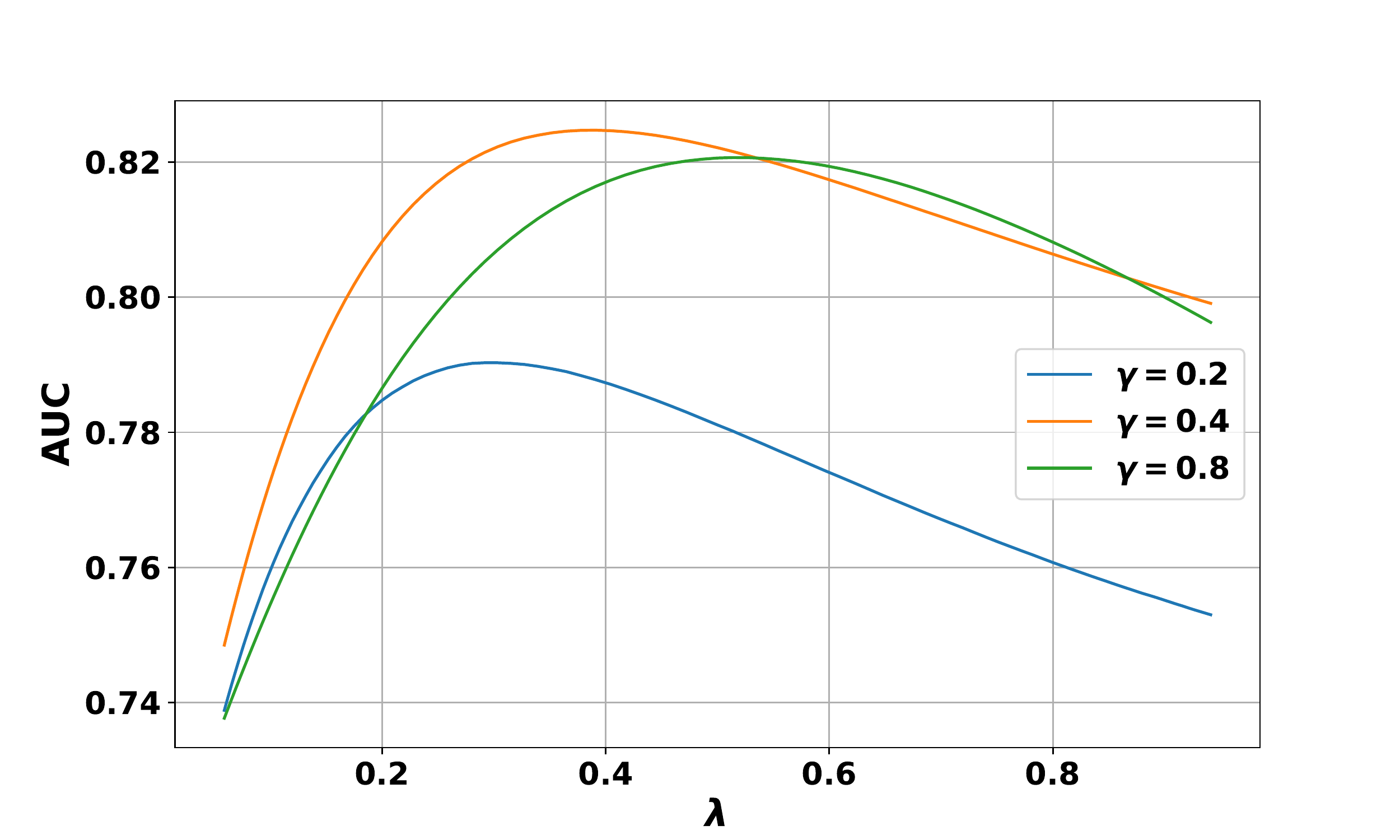}
        \caption{$\lambda$-AUC curves of $\mathcal{E}_{\lambda}$ on GSM8K (PAL).}
        \label{fig:auc-gsm8k}
    \end{subfigure}
    \begin{subfigure}{0.49\textwidth}
        \centering
        \includegraphics[width=\textwidth]{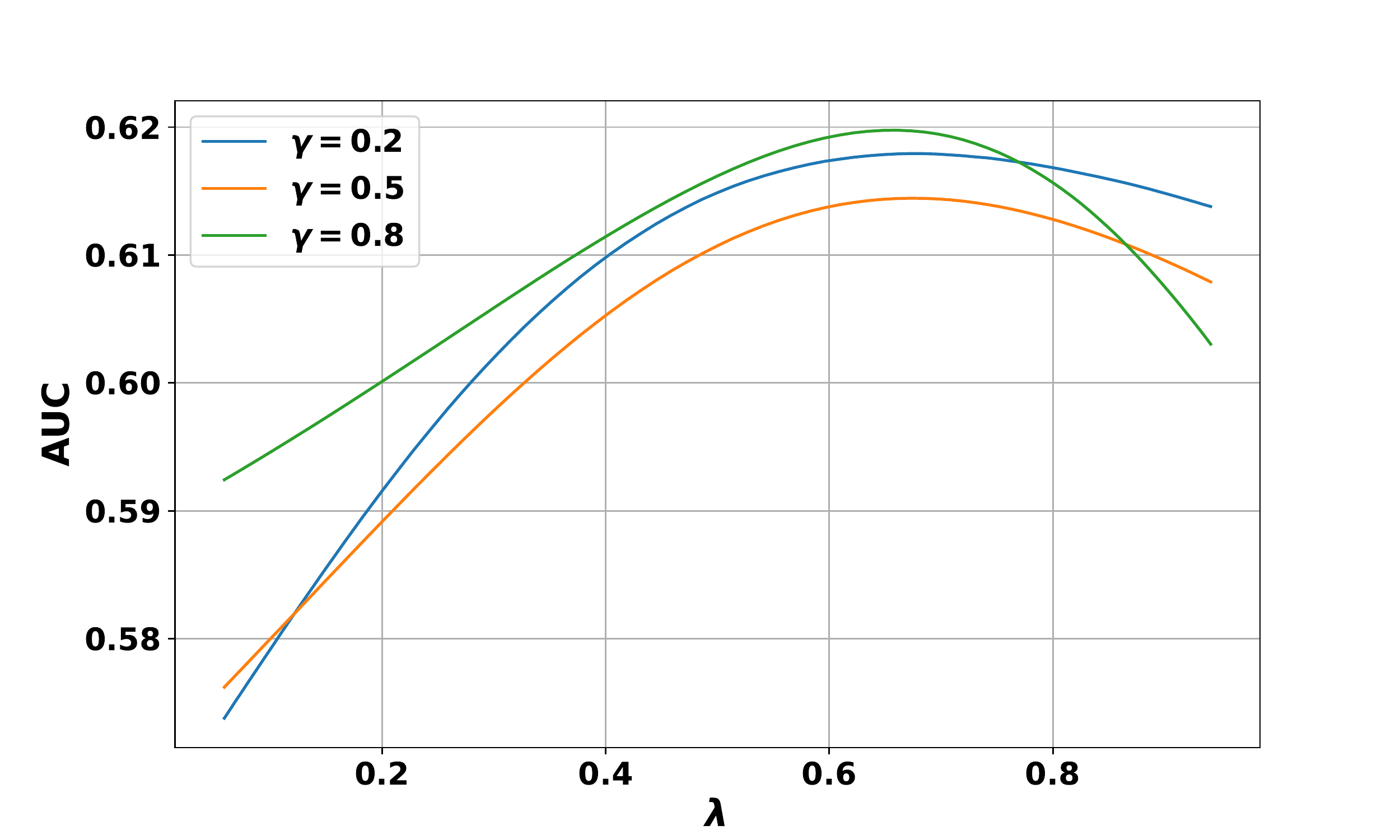}
        \caption{$\lambda$-AUC curves of $\mathcal{E}_{\lambda}$ on StrategyQA (CoT).}
        \label{fig:auc-strategyqa}
    \end{subfigure}
    \caption{The change of AUC scores with different values of $\lambda$ in $\mathcal{E}_{\lambda}$. We calculate the AUC score as how $\mathcal{E}_{\lambda}$ can successfully determine whether the corresponding predicted reasoning chain can produce the ground-truth answer. The predictions here are from the baseline methods (\textit{i.e.}, CoT \& PAL) with different LM temperatures $\gamma$, as represented by curves of different colors.}
    \label{fig:lambda-auc-curves}
\end{figure}

\begin{figure}[h]
    \centering
    \begin{subfigure}{0.49\textwidth}
        \centering
        \includegraphics[width=\textwidth]{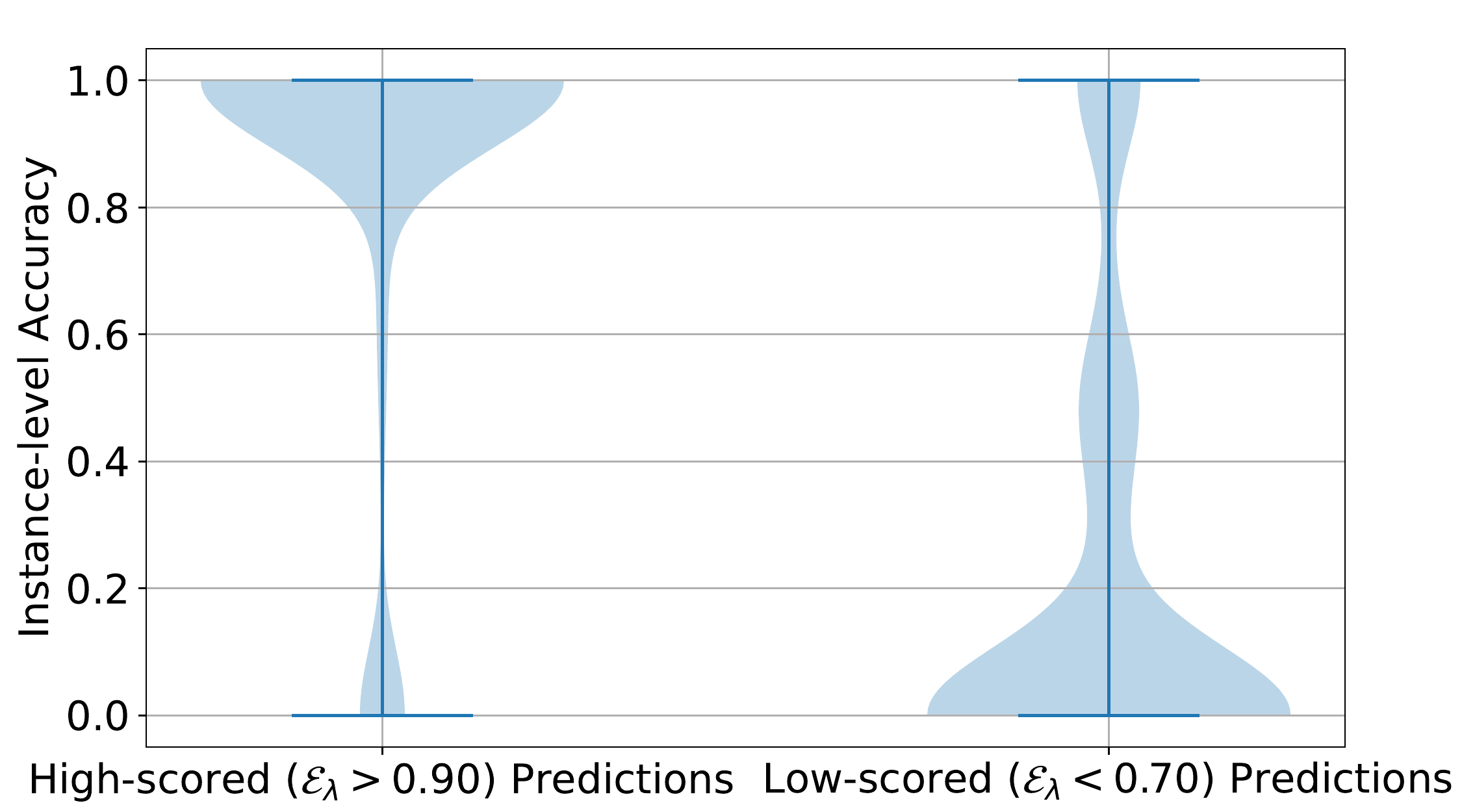}
        \caption{GSM8K (PAL prompting)}
        \label{fig:hl-accu-gsm}
    \end{subfigure}
    \begin{subfigure}{0.49\textwidth}
        \centering
        \includegraphics[width=\textwidth]{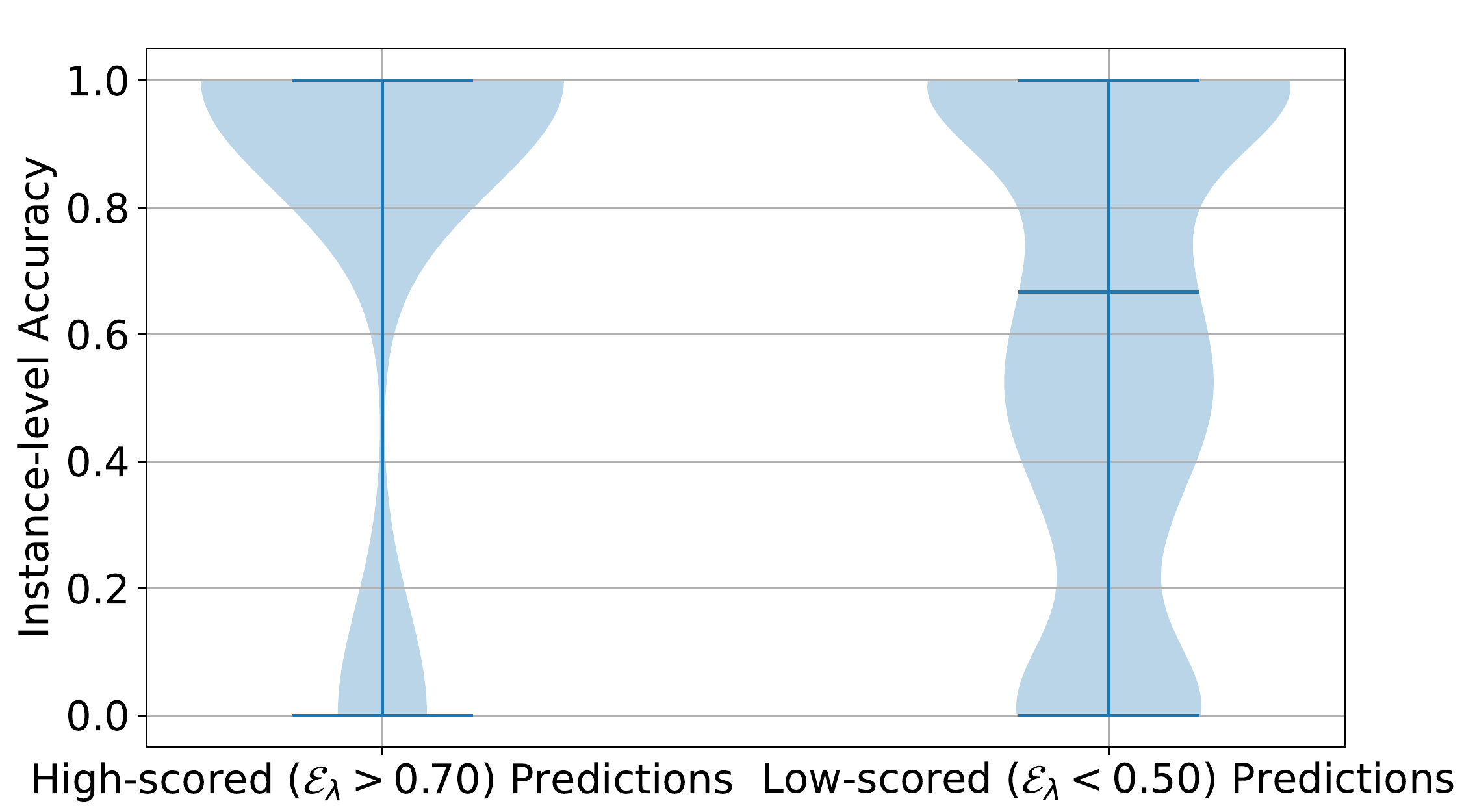}
        \caption{StrategyQA (CoT prompting)}
        \label{fig:hl-accu-sqa}
    \end{subfigure}
    \caption{Comparison between predictions of high v.s. low self-evaluation scores on instance-level accuracy. }
    \label{fig:hl-accu}
\end{figure}

\paragraph{More Analysis on Self-Evaluation.}
Recall that we use a combination of generation confidence and faithfulness score as $\mathcal{E}_{\lambda} = \mathcal{C}^{\lambda} \cdot \mathcal{P}^{(1 - \lambda)}$, with $\lambda \in [0, 1]$.
In our experiments, we set $\lambda = 0.5$ for all tasks for simplicity. However, we investigate its effects here since, intuitively, it is an important hyperparameter for distinguishing correct / incorrect predictions and might require different values for various reasoning tasks and datasets. Its effect is also coupled with the language model temperature $\gamma$.

Figure~\ref{fig:lambda-auc-curves} demonstrates how $\lambda$ functions on arithmetic (GSM8K) and commonsense (StrategyQA).
In general, we observe that the performance remains relatively stable with different choices of $\lambda$ on different datasets, although fine-tuning this hyperparameter might lead to further improvements. This stability suggests that the choice of $\lambda$ is not overly sensitive across various reasoning tasks and datasets, but exploring its optimal value for specific tasks could potentially lead to even better performances.

To examine the influence of incorporating faithfulness on LLM final predictions, we plot the distributions of changes in different scores, specifically the faithfulness score $\mathcal{C}$, the generation confidence $\mathcal{P}$, and the overall decoding score $\mathcal{E}_{\lambda}$ on the baseline reasoning chains and the reasoning chains generated by our method. We categorize the data points into $4$ sets based on whether our approach changes the final prediction. Since the majority of the data points belong to the ``both correct'' set (in blue), where both baselines and our method generate accurate predictions, we particularly highlight the last two sets (in green and red), where our method results in improvement and degradation, respectively.

As shown in Figure~\ref{fig:score-shift}, faithfulness typically works by significantly increasing the evaluation confidence $\mathcal{C}$ of model predictions, while the generation confidence $\mathcal{P}$ remains similar to that of the baseline methods. Specifically, for the evaluation confidence $\mathcal{C}$, our approach corrects the original predictions by increasing the confidence scores. This indicates that evaluation confidence plays a crucial role in guiding the decoding toward a better reasoning choice in decomposed reasoning. The increase is more significant for PAL when compared with CoT. This demonstrates that LLMs are generally better at judging the logic in reasoning that is more structured, while the free-text intermediate steps (\textit{e.g.}, CoT reasoning) may be challenging to conduct information extraction and soundness checking.

A similar conclusion can be drawn from Figure~\ref{fig:hl-accu}, where the difference in instance-level accuracy distributions between high-scored and low-scored predictions is more significant on the GSM8K dataset.
For StrategyQA, while the incorporation of faithfulness helps, the level of the score value does not align well with whether the prediction is correct. 
For example, most of the low-scored predictions can still obtain the correct answers, as shown by the plot on the right of Figure~\ref{fig:hl-accu-sqa}.

\begin{figure}[htbp]
    \centering
    \begin{subfigure}{\textwidth}
        \centering
        \includegraphics[width=\textwidth]{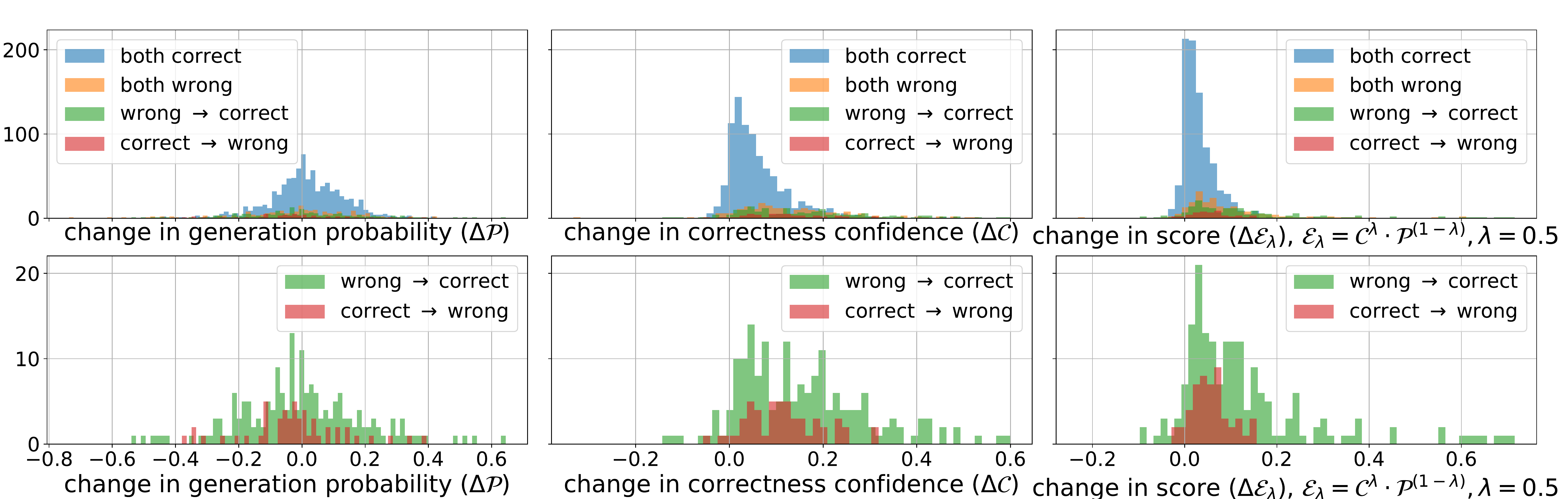}
        \caption{Distributions of score shifts on GSM8K using PAL prompting.}
        \label{fig:shift-gsm}
    \end{subfigure}
    \begin{subfigure}{\textwidth}
        \centering
        \includegraphics[width=\textwidth]{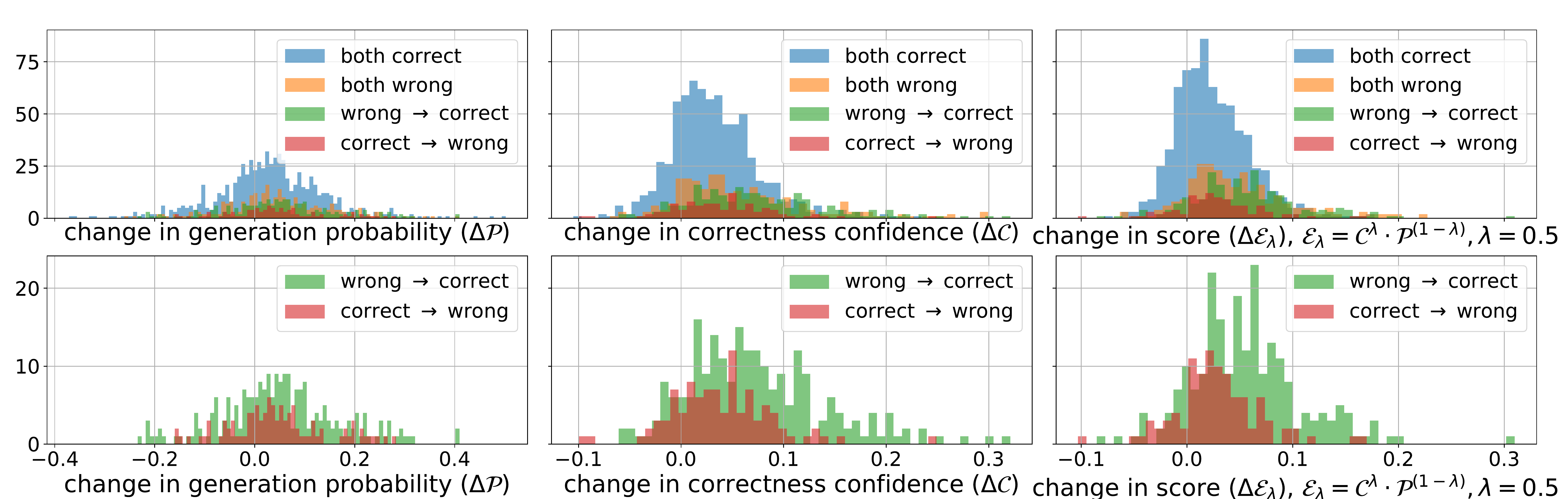}
        \caption{Distributions of score shifts on GSM8K using CoT prompting.}
        \label{fig:shift-gsm-cot}
    \end{subfigure}
    \begin{subfigure}{\textwidth}
        \centering
        \includegraphics[width=\textwidth]{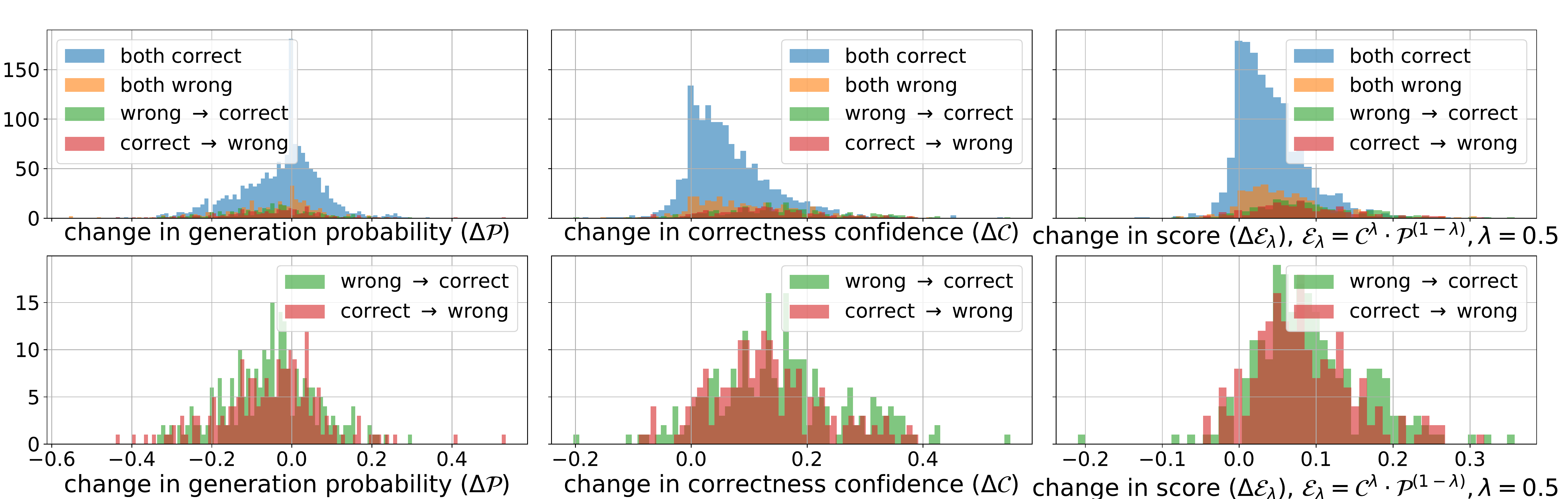}
        \caption{Distributions of score shifts on StrategyQA using CoT prompting.}
        \label{fig:shift-sqa}
    \end{subfigure}
    \caption{Distributions of changes in scores from baselines to our method. Since the prediction correctness keeps unchanged most of the time (\textit{i.e.}, ``both correct/incorrect'' in blue/orange), we specifically plot how the scores shift on data points where the predictions get corrected or incorrect, as shown in green and red, respectively.}
    \label{fig:score-shift}
\end{figure}

\newpage

\begin{table}[t]
    \centering
    \caption{Impact of LLM backends (Codex vs. ChatGPT vs. GPT-4) and prompting methods (PAL vs. CoT). The results of ChatGPT (\texttt{gpt-3.5-turbo}) were obtained on 20 March 2023.}
    \begin{tabular}{lcc}
        \toprule
        Model    & GSM8K & StrategyQA  \\
        \midrule
        CoT $_{\mathrm{Codex}}$ & $65.6$ & $73.2$ \\
        PAL $_{\mathrm{Codex}}$ & $72.0$ & $-$ \\
        CoT $_{\mathrm{ChatGPT}}$ & $80.8$ & $65.9$ \\
        PAL $_{\mathrm{ChatGPT}}$ & $78.7$ & $-$ \\
        CoT $_{\mathrm{GPT-4}}$ & $\mathbf{92.0}$ & $-$ \\
        \midrule
        Ours (CoT) $_{\mathrm{Codex}}$ & $71.9$ & $\mathbf{77.2}$ \\
        Ours (PAL) $_{\mathrm{Codex}}$ & $80.2$ & $-$ \\
        \bottomrule
        \end{tabular}
        \label{table:chatgpt}
\end{table}

\paragraph{LLM Backbone Study.}
We are interested in how stronger LLMs (\textit{i.e.}, ChatGPT, GPT-4~\citep{openai2023gpt4}) work, but they are not directly compatible with our approach since the API does not return token logits. 

Table~\ref{table:chatgpt} compares the results of various backend LLMs (\textit{i.e.}, Codex, ChatGPT, and GPT-4) on GSM8K. In arithmetic reasoning with PAL prompting, our Codex-based method achieves competitive results ($80.2\%$ vs. $78.7\%$) even when compared with ChatGPT. The results are consistent across other datasets, including AQuA ($55.9\%$ vs. $54.7\%$), SVAMP ($89.6\%$ vs. $84.1\%$), ASDiv ($84.9\%$ vs. $84.1\%$), and TabMWP ($79.1\%$ vs. $80.6\%$). 
In commonsense reasoning, our method using Codex significantly outperforms ChatGPT-based methods across different datasets, including StrategyQA ($77.2\%$ vs. $65.9\%$), CommonsenseQA ($78.6\%$ vs. $75.2\%$) and \texttt{Sports Understanding} ($98.4\%$ vs. $95.9\%$). One possible explanation is that ChatGPT lacks sufficient world knowledge for effective fact checking and commonsense reasoning.
Given the significant performance improvement of GPT-4, we conduct further analysis about how to synergistically combine it with our method.

\paragraph{GPT-4 Experiments}
\label{appendix:chatgpt}
The recently launched GPT-4 has demonstrated notable improvements in reasoning capabilities across a variety of tasks. In this section, we examine and compare the reasoning skills of different large language models (LLMs), specifically Codex and GPT-4, in assessing and determining the accuracy of each step in a reasoning chain. We contrast the confidence scores and corresponding explanations for Codex ($\mathcal{C}$) and GPT-4 ($\mathcal{S}$) in the context of both arithmetic and commonsense reasoning, as shown in Figure~\ref{fig:gpt4-gsm} and Figure~\ref{fig:gpt4-sqa}, respectively. For ease of visualization, we employ the same colormap (shown in Figure~\ref{fig:apx-cmap}) as in Figure~\ref{fig:case-study} to represent the scale of scores. Since OpenAI has not provided access to the token-wise likelihood of generated text, we directly request GPT-4 to score the reasoning steps using binary values ~\footnote{The results using GPT-4 were obtained on 22 April 2023.}. Moreover, we report the average of three evaluation results to reduce the variance of sampling discrete values, \textit{i.e.}, $S = (S_1 + S_2 + S_3) / 3, S_i \in [0, 1]$.

As illustrated in Figure~\ref{fig:gpt4-gsm}, GPT-4 demonstrates greater effectiveness in pinpointing the central logical error in arithmetic reasoning. For instance, we can observe that $\mathcal{S} < \mathcal{C}$ for \pyobject{alex\_total = alex\_weight + weight\_multiplier * grace\_weight} and $\mathcal{S} > \mathcal{C}$ for \pyobject{answer = grace\_weight + alex\_total}, where the former leads to an incorrect final answer.
Additionally, GPT-4 typically offers detailed explanations and alternative solutions. As seen in the step \pyobject{answer = grace\_weight + alex\_total}, GPT-4 can correct minor errors even when it arrives at the correct final answer.
However, GPT-4 may still encounter difficulties in detecting small errors within the text, which can have a significant impact on logical consistency. This challenge is illustrated by the substantial variance in $\mathcal{S}$ for the step \pyobject{alex\_total = alex\_weight + weight\_multiplier * grace\_weight}. 


The benefits of well-crafted explanations in GPT-4 become more significant when handling complex reasoning tasks, as demonstrated in Figure~\ref{fig:gpt4-sqa}.
For instance, in the $R_{42}$ of $Q_4$ shown in Figure~\ref{fig:sqa-case-study}, Codex generally assigns high evaluation scores for all steps.
Although this reasoning chain leads to the correct final answer, it makes some overly definitive assumptions without reasonable justification (e.g., ``must have attributes that match both'').
In such cases, GPT-4 can accurately identify these vague statements through meticulous analysis.
Moreover, the comprehensive analysis helps address the growing uncertainty arising from the ambiguity in understanding commonsense questions.
For example, in the final step, GPT-4 offers extensive explanations for assigning low $\mathcal{S}$ scores, considering the limited information available to comprehend the question.

In summary, GPT-4 demonstrates promising improvements in LLM evaluation by providing more rigorous and in-depth explanations. While there is significant variation in its generated explanations for some challenging cases, these explanations are typically beneficial as they elaborate on understanding and interpretation based on the given information. This suggests the potential for using LLM explanations to enhance self-evaluation in future research.

\begin{figure}[htbp]
    \centering
    \includegraphics[width=0.5\textwidth]{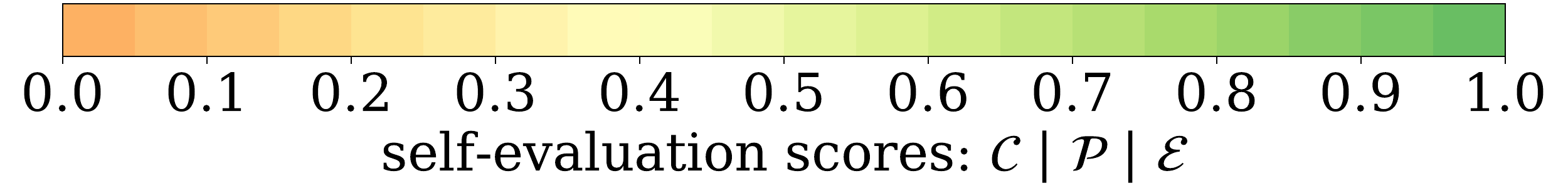}
    \caption{Score from low to high values visualized using colormap from \colorbox[RGB]{253,229,205}{\makebox(25,6){orange}} to \colorbox[RGB]{217,234,211}{\makebox(24,6){green}}.}
    \label{fig:apx-cmap}
\end{figure}

\begin{figure}[htbp]
    \centering
    \includegraphics[width=\textwidth]{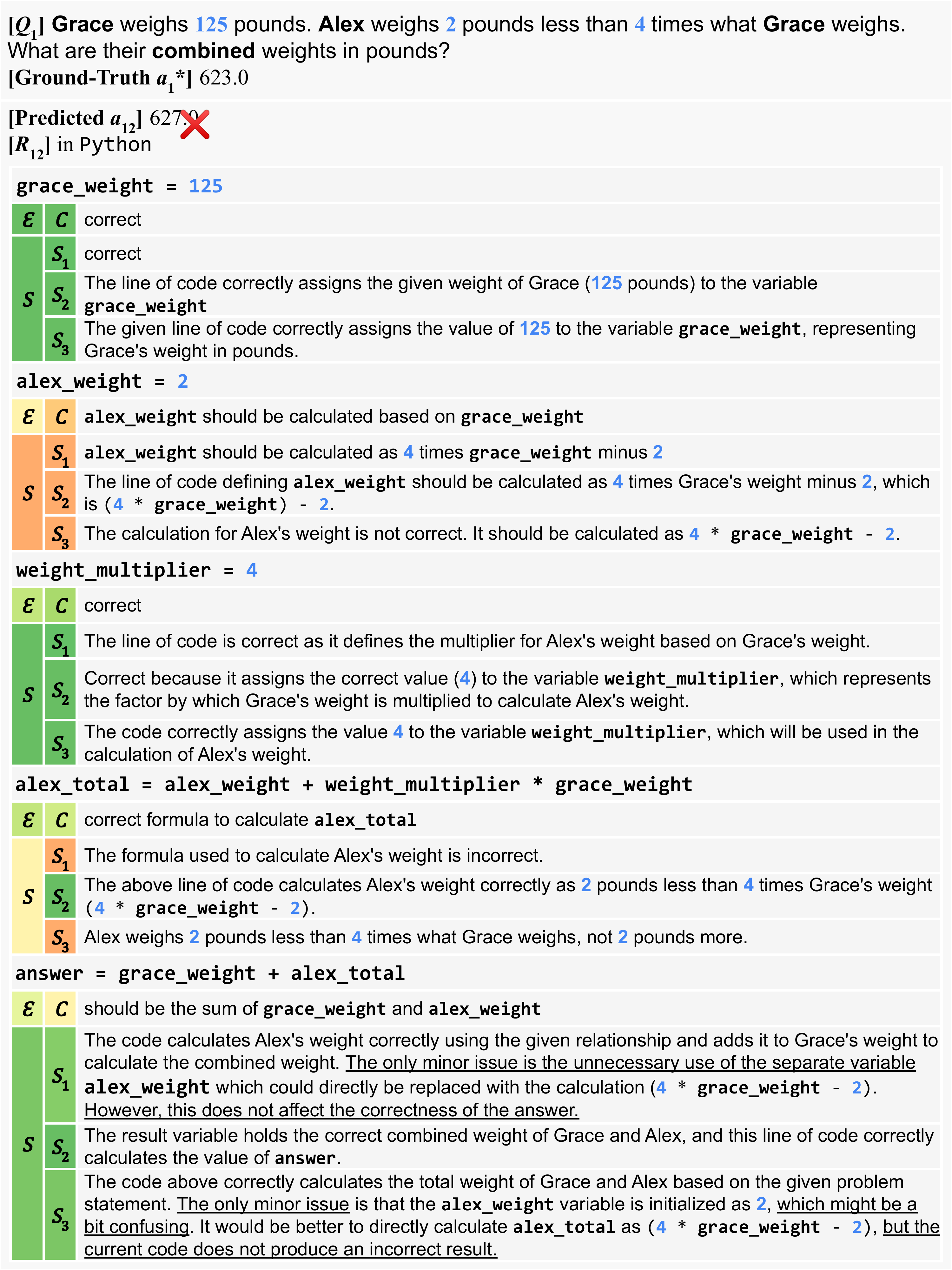}
    \caption{Comparison between Codex ($\mathcal{C}$) and GPT-4 ($\mathcal{S}$) on evaluating the reasoning chains in arithmetic reasoning. The example ($R_{12}$ for $Q_1$) is the same one from Figure~\ref{fig:gsm-case-study}. We underline the key statements in explanations that justify the judgment of GPT-4.}
    \label{fig:gpt4-gsm}
\end{figure}

\begin{figure}[htbp]
    \centering
    \includegraphics[width=\textwidth]{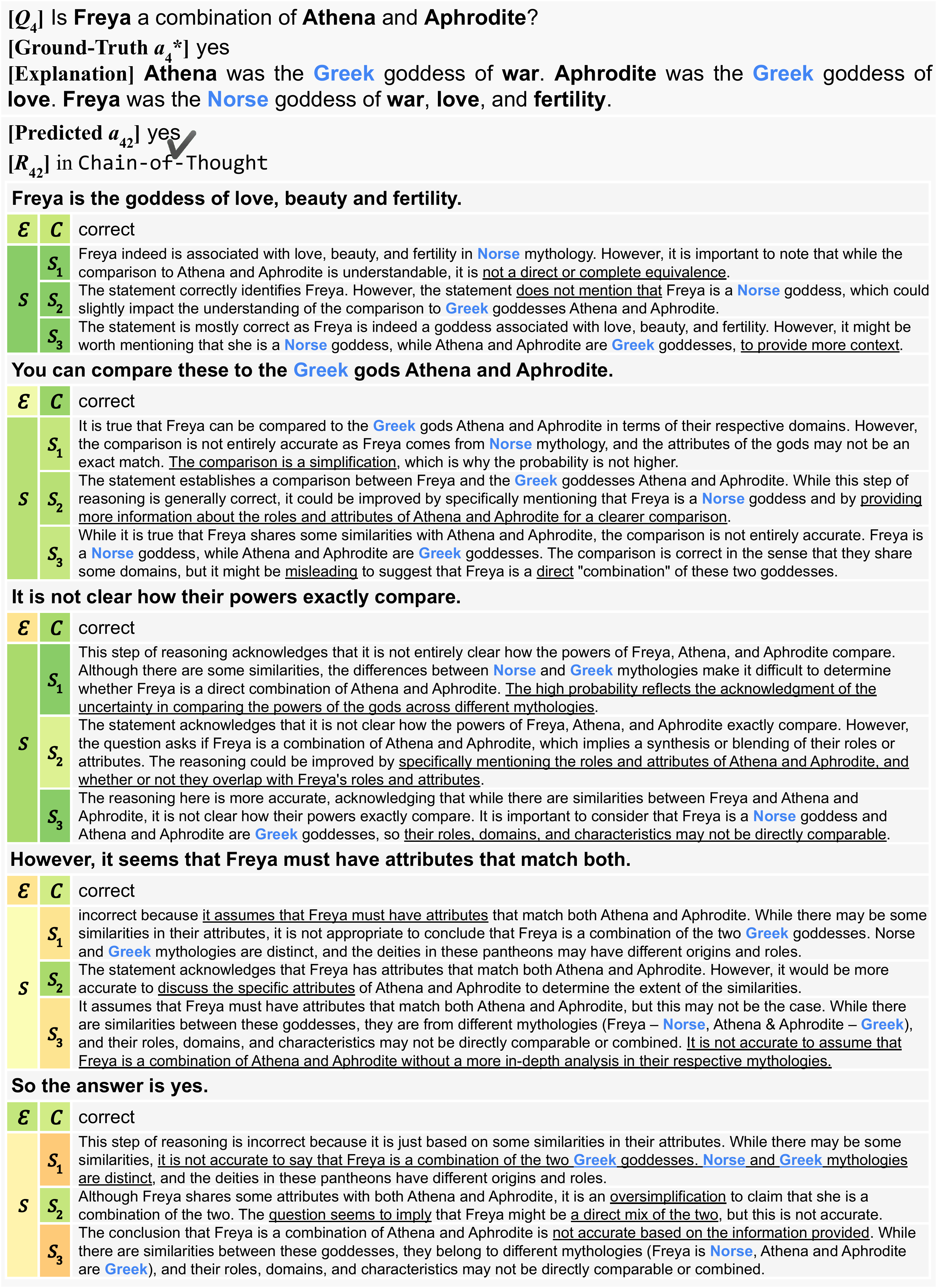}
    \caption{Comparison between Codex ($\mathcal{C}$) and GPT-4 ($\mathcal{S}$) on evaluating the reasoning chains in commonsense reasoning. The example ($R_{42}$ for $Q_4$) is the same one from Figure~\ref{fig:sqa-case-study}. We underline the key points in GPT-4 rationales that explain the detailed understanding and analysis on the steps.}
    \label{fig:gpt4-sqa}
\end{figure}

\newpage

\subsection{Implementation Details}\label{appendix:setup}
Similar to beam search, we maintain $k$ distinct candidates in the beam and sample $n$ completions for each one. Thus, for each reasoning step $s^t$, the search space has a size of $k \cdot n$. After acquiring $k \cdot n$ samples, we retain $k$ candidates by sampling from $P_{beam}$ as Eq.~\ref{eq:sbs}. We set $k=5, n=16$ with Codex backbone to balance the quality and efficiency. The maximum number of steps to decode is capped at $16$. To control the computational cost and time complexity, one can also reduce the number of rollouts per beam and the beam size to $n=2$ and $k\in[3, 4]$ respectively, as we illustrate with Llama-2 backbone.

We set generation temperatures differently for various tasks and baselines. 
Regarding the generation temperature $\gamma$ on Codex, for arithmetic and symbolic reasoning with PAL using deterministic beam search ($\tau=0.0$), we find that $\gamma \in [0.4, 0.8]$ generally works well. In contrast, for commonsense reasoning with CoT, a lower temperature ($\gamma \in [0.1, 0.5]$) is more effective, likely due to the increased randomness from the free-text format. 
Differently, when using Llama-2 backbone, PAL generally works better with lower generation temperature $\gamma \leq 0.5$, while CoT can tolerate larger $\gamma > 0.5$ with better or comparable performance.
This difference between Codex and Llama-2 may come from the different training tasks and data adopted for the two models, where the PAL reasoning is especially enhanced in Codex.

In majority voting, higher $\gamma$ is preferred to better explore the search space in reasoning, \textit{e.g.}, $\gamma \geq 0.5$ for arithmetic reasoning.
To further introduce sampling randomness in stochastic beam search for majority voting on multiple reasoning chains, we use $\alpha=0.5$ for all datasets but different values of $\tau$ for each task. 
Specifically, we choose $\tau=0.5$ for PAL and $\tau=0.2$ for CoT, as PAL typically decomposes the reasoning problem into more steps than CoT. 
Here we tune $\tau$ instead of $\alpha$ to be smaller in CoT as CoT naturally contains more randomness due to its free-text formulation as we observe in practice, where a smaller $\tau$ is more efficient to balance this randomness. 

In previous works, majority voting on multiple reasoning chains involves sampling $N$ (usually $\geq 20$) reasoning chains and conducting a vote to determine the final answer, which can be time-consuming. In our approach, we simply perform majority voting among the $N$ candidates in the last step of beam search from only a few times ($\leq 10$) of searching. As a result, our method does not introduce additional time complexity compared to the conventional majority voting method, although we sacrifice some diversity in the final outcomes due to the similarity of candidates within a beam. 

\paragraph{Prompts.}\label{appendix:prompt}
We show examples of both the generation and evaluation prompts we use on different tasks in the following tables, where we only show one instance for each case.
Full prompts and detailed formulations can be found in our code.


\begin{table}[htbp]
    \centering
    \caption{Examples of few-shot exemplars of generation and evaluation CoT prompts on GSM8K. We adopt $8$-shot and $5$-shot learning for generation and evaluation, respectively.}
    \small
    \begin{tabular}{p{0.9\textwidth}}
        \toprule
        \underline{\textbf{\textsc{GENERATION}}} \\
        \textbf{Q}: There are 15 trees in the grove. Grove workers will plant trees in the grove today. After they are done, there will be 21 trees. How many trees did the grove workers plant today?  \\
        \textbf{A}: \\
        There are 15 trees originally. \\
        And there were 21 trees after some more were planted. \\
        So 21 - 15 = 6 trees were planted. \\
        So the answer is 6.\\
        \midrule
        \underline{\textbf{\textsc{EVALUATION}}} \\
        \textbf{Q}: Allison brought some CDs online. Each CD cost \$7. There was an additional charge of \$4 per order for shipping costs. The total bill came to \$60. How many CDs did Allison buy?   \\
        \textbf{A}: \\
        Each CD cost 7 dollars.\\
        \footnotesize \specialhighlight{\# Is the above step of reasoning:}\\
        \footnotesize \specialhighlight{\# (A) Correct}\\
        \footnotesize \specialhighlight{\# (B) Incorrect}\\
        \footnotesize \specialhighlight{\# The above step of reasoning is (A)}\\
        And there was an additional charge of 4 dollars.\\
        \footnotesize \specialhighlight{\# Is the above step of reasoning:}\\
        \footnotesize \specialhighlight{\# (A) Correct}\\
        \footnotesize \specialhighlight{\# (B) Incorrect}\\
        \footnotesize \specialhighlight{\# The above step of reasoning is (A)}\\
        So the total cost of each CD is 7 + 4 = 11 dollars.\\
        \footnotesize \specialhighlight{\# Is the above step of reasoning:}\\
        \footnotesize \specialhighlight{\# (A) Correct}\\
        \footnotesize \specialhighlight{\# (B) Incorrect}\\
        \footnotesize \specialhighlight{\# The above step of reasoning is (B), because each CD costs 7 dollars.}\\
        So 60 / 11 = 5.45.\\
        \footnotesize \specialhighlight{\# Is the above step of reasoning:}\\
        \footnotesize \specialhighlight{\# (A) Correct}\\
        \footnotesize \specialhighlight{\# (B) Incorrect}\\
        \footnotesize \specialhighlight{\# The above step of reasoning is (B), because it cost 4 dollars for shipping costs. So the cost of CDs is 60 - 4 = 56 dollars. So Allison bought 56 / 7 = 8 CDs.}\\
        So the answer is 5.\\
        \footnotesize \specialhighlight{\# Is the above step of reasoning:}\\
        \footnotesize \specialhighlight{\# (A) Correct}\\
        \footnotesize \specialhighlight{\# (B) Incorrect}\\
        \footnotesize \specialhighlight{\# The above step of reasoning is (A), but the value of the number of CDs is incorrect.}\\
        \bottomrule
    \end{tabular}
    \label{tab:exp-gsm-cot}
\end{table}

\begin{table}[htbp]
    \centering
    \caption{Examples of few-shot exemplars of generation and evaluation PAL prompts on \textbf{GSM8K}. We adopt $9$-shot and $5$-shot learning for generation and evaluation, respectively.}
    \begin{tabular}{p{0.9\textwidth}}
        \toprule
        \underline{\textbf{\textsc{GENERATION}}} \\
        \textbf{Q}: Olivia has \$23. She bought five bagels for \$3 each. How much money does she have left?
\begin{minted}[breaklines,fontsize=\footnotesize]{python}
def solution():
    money_initial = 23
    bagels = 5
    bagel_cost = 3
    money_spent = bagels * bagel_cost
    money_left = money_initial - money_spent
    result = money_left
    return result
\end{minted}
        \\
        \midrule
        \underline{\textbf{\textsc{EVALUATION}}} \\
        \textbf{Q}: A piece of square paper has a perimeter of 32 centimeters. Nicky's dog, Rocky, tore off 1/4 of the paper. What is the area of the remaining paper?
\begin{minted}[breaklines,fontsize=\footnotesize]{python}
def solution():
    perimeter = 32
    # Is the above line of code:
    # (A) Correct
    # (B) Incorrect
    # The above line of code is: (A)
    fraction_torn = 1 / 4
    # Is the above line of code:
    # (A) Correct
    # (B) Incorrect
    # The above line of code is: (A)
    area_total = (perimeter / 4) ** 2
    # Is the above line of code:
    # (A) Correct
    # (B) Incorrect
    # The above line of code is: (A), because the total area of the square can be calculated by (perimeter / 4) ** 2
    area_remaining = (1 - fraction_torn) * area_total
    # Is the above line of code:
    # (A) Correct
    # (B) Incorrect
    # The above line of code is: (A)
    result = area_total
    # Is the above line of code:
    # (A) Correct
    # (B) Incorrect
    # The above line of code is: (B), because the result should be area_remaining
    return result
    # Is the above line of code:
    # (A) Correct
    # (B) Incorrect
    # The above line of code is: (A), but the value of result is incorrect
\end{minted} 
        \\
        \bottomrule
    \end{tabular}
    \label{tab:exp-gsm-pal}
\end{table}

\begin{table}[htbp]
    \centering
    \caption{Examples of few-shot exemplars of generation and evaluation PAL prompts on \textbf{AQuA}. Here \textcolor{teal}{\texttt{...}} represents the same evaluation script as those in the previous steps. We adopt $8$-shot and $5$-shot learning for generation and evaluation, respectively.}
    \begin{tabular}{p{0.9\textwidth}}
        \toprule
        \underline{\textbf{\textsc{GENERATION}}} \\
        \footnotesize \textbf{Question}: In a flight of 600 km, an aircraft was slowed down due to bad weather. Its average speed for the trip was reduced by 200 km/hr and the time of flight increased by 30 minutes. The duration of the flight is:\\
        \footnotesize \textbf{Answer Choices}: A)1 hour; B)2 hours; C)3 hours; D)4 hours; E)5 hours
\begin{minted}[breaklines,fontsize=\footnotesize]{python}
def solution():
    duration = Symbol('duration', positive=True)
    delay = 30 / 60
    total_disntace = 600
    original_speed = total_disntace / duration
    reduced_speed = total_disntace / (duration + delay)
    solution = solve_it(original_speed - reduced_speed - 200, duration)
    duration = solution[duration]
    result = duration
    return result
\end{minted}
        \\
        \midrule
        \underline{\textbf{\textsc{EVALUATION}}} \\
        \footnotesize \textbf{Question}: Two trains of length 150 m and 200 m are 100 m apart. They start moving towards each other on parallel tracks, at speeds 54 kmph and 72 kmph. In how much time will the trains cross each other?\\
        \footnotesize \textbf{Answer Choices}: A)100/7 sec; B)80/7 sec; C)57/7 sec; D)110/7 sec; E)50/7 sec
\begin{minted}[breaklines,fontsize=\footnotesize]{python}
def solution():
    train_1_speed = 54 / 60
    # Is the above line of code:
    # (A) Correct
    # (B) Incorrect
    # The above line of code is: (A)
    train_2_speed = 72 / 60
    # Is the above line of code:
    # (A) Correct
    # (B) Incorrect
    # The above line of code is: (A)
    distance_between_trains = 100
    # Is the above line of code:
    # (A) Correct
    # (B) Incorrect
    # The above line of code is: (A)
    train_1_length = 150
    # Is the above line of code:
    # (A) Correct
    # (B) Incorrect
    # The above line of code is: (A)
    train_2_length = 200
    # ...
    # The above line of code is: (A)
    time_to_cross = distance_between_trains / (train_1_speed + train_2_speed)
    # ...
    # The above line of code is: (B), because to cross each other, the total distance should also contain the train length
    result = time_to_cross
    # ...
    # The above line of code is: (B), because the final result should be in seconds, and the value of time_to_cross is incorrect
    return result
    # ...
    # The above line of code is: (A), but the value of result is incorrect
\end{minted}
        \\
        \bottomrule
    \end{tabular}
    \label{tab:exp-aqua}
\end{table}

\begin{table}[htbp]
    \centering
    \caption{Examples of few-shot exemplars of generation and evaluation PAL prompts on \textbf{SVAMP} and \textbf{ASDiv}. Here we utilize the same prompts as they have the same task formulation. We adopt $7$-shot and $5$-shot learning for generation and evaluation, respectively.}
    \begin{tabular}{p{0.9\textwidth}}
        \toprule
        \underline{\textbf{\textsc{GENERATION}}} \\
        \footnotesize \textbf{Passage}: James bought 93 red and 10 blue stickers, he used 31 red sticker on his fridge and 7 blue stickers on his laptop.\\
        \footnotesize \textbf{Question}: How many red stickers does James have?
\begin{minted}[breaklines,fontsize=\footnotesize]{python}
def solution():
    original_red_stickers = 93
    used_red_stickers = 31
    red_stickers = original_red_stickers - used_red_stickers
    result = red_stickers
    return result
\end{minted}
        \\
        \midrule
        \underline{\textbf{\textsc{EVALUATION}}} \\
        \footnotesize \textbf{Passage}: A piece of square paper has a perimeter of 32 centimeters. Nicky's dog, Rocky, tore off 1/4 of the paper.\\
        \footnotesize \textbf{Question}: What is the area of the remaining paper?
\begin{minted}[breaklines,fontsize=\footnotesize]{python}
def solution():
    perimeter = 32
    # Is the above line of code:
    # (A) Correct
    # (B) Incorrect
    # The above line of code is: (A)
    side_length = perimeter / 4
    # Is the above line of code:
    # (A) Correct
    # (B) Incorrect
    # The above line of code is: (A)
    area = side_length ** 2
    # Is the above line of code:
    # (A) Correct
    # (B) Incorrect
    # The above line of code is: (A)
    result = area
    # Is the above line of code:
    # (A) Correct
    # (B) Incorrect
    # The above line of code is: (B), because should calculate the remaining area after torn off as result
    return result
    # Is the above line of code:
    # (A) Correct
    # (B) Incorrect
    # The above line of code is: (A), but the value of result is incorrect
\end{minted}
        \\
        \bottomrule
    \end{tabular}
    \label{tab:exp-svamp}
\end{table}

\begin{table}[htbp]
    \centering
    \caption{Examples of few-shot exemplars of generation and evaluation PAL prompts on \textbf{TabMWP}. We adopt $4$-shot and $5$-shot learning for generation and evaluation, respectively.}
    \begin{tabular}{p{0.9\textwidth}}
        \toprule
        \underline{\textbf{\textsc{GENERATION}}} \\
        \footnotesize \textbf{Table of "Coin collections"}: \\
        \tiny Name | Number of coins \\
        \tiny Braden | 76 $\backslash\backslash$ \tiny Camilla | 94 $\backslash\backslash$ \tiny Rick | 86 \\
        \tiny Mary | 84 $\backslash\backslash$ \tiny Hector | 80 $\backslash\backslash$ \tiny Devin | 83 \\ 
        \tiny Emily | 82 $\backslash\backslash$ \tiny Avery | 87 \\
        \footnotesize \textbf{Question}: Some friends discussed the sizes of their coin collections. What is the mean of the numbers?
\begin{minted}[breaklines,fontsize=\footnotesize]{python}
def solution():
    number_of_coins_for_different_person = [76, 94, 86, 84, 80, 83, 82, 87]
    mean_of_the_numbers = sum(number_of_coins_for_different_person) / len(number_of_coins_for_different_person)
    result = mean_of_the_numbers
    return result
\end{minted}
        \\
        \midrule
        \underline{\textbf{\textsc{EVALUATION}}} \\
        \footnotesize \textbf{Table of "Roller coasters per amusement park"}: \\
        \tiny Stem | Leaf \\
        \tiny 1 | 0, 0, 1, 6, 8, 9 $\backslash\backslash$ \tiny 2 | 4, 4, 5, 7, 8, 8 \\
        \tiny 3 | 1, 2, 4, 4, 9, 9 $\backslash\backslash$ \tiny 4 | 2, 3, 5, 6, 8, 9, 9 \\
        \footnotesize \textbf{Question}: Rodrigo found a list of the number of roller coasters at each amusement park in the state. How many amusement parks have fewer than 40 roller coasters?
\begin{minted}[breaklines,fontsize=\footnotesize]{python}
def solution():
\end{minted}
\begin{minted}[breaklines,fontsize=\tiny]{python}
    number_of_roller_coasters_per_amusement_park = [10, 14, 14, 15, 16, 18, 19, 20, 24, 25, 26, 28, 29, 29, 29, 30, 34, 35, 36, 39, 40, 40, 40, 41, 42, 43, 44, 44, 45, 45, 46, 46, 47, 48, 48, 49, 49, 49, 50, 50, 51, 51, 52, 52, 53, 53, 54, 54, 55, 55, 56, 56, 57, 57, 58, 58, 59, 59, 60, 60, 61, 61, 62, 62, 63, 63, 64, 64, 65, 65, 66, 66, 67, 67, 68, 68, 69, 69, 70, 70, 71, 71, 72, 72, 73, 73, 74, 74, 75, 75, 76, 76, 77, 77, 78, 78, 79, 79, 80, 80, 81, 81, 82, 82, 83, 83, 84, 84, 85, 85, 86, 86, 87, 87, 88, 88, 89, 89, 90, 90, 91, 91, 92, 92, 93, 93, 94, 94, 95, 95, 96, 96, 97, 97, 98, 98, 99, 99]
    # Is the above line of code:
    # (A) Correct
    # (B) Incorrect
    # The above line of code is: (B), beacuse values in the rows of Stem and Leaf represent the decimal and individual digits, respectively
\end{minted}
\begin{minted}[breaklines,fontsize=\footnotesize]{python}
    number_of_amusement_parks_with_fewer_than_40_roller_coasters = 0
    # Is the above line of code:
    # (A) Correct
    # (B) Incorrect
    # The above line of code is: (A), because this is to initialize the number_of_amusement_parks_with_fewer_than_40_roller_coasters
    for number_of_roller_coasters in number_of_roller_coasters_per_amusement_park:
        if number_of_roller_coasters < 40:
            number_of_amusement_parks_with_fewer_than_40_roller_coasters += 1
            # Is the above line of code:
            # (A) Correct
            # (B) Incorrect
            # The above line of code is: (A), but the value of number_of_roller_coasters_per_amusement_park is incorrect
    result = number_of_amusement_parks_with_fewer_than_40_roller_coasters
    # Is the above line of code:
    # (A) Correct
    # (B) Incorrect
    # The above line of code is: (A), but the value of number_of_amusement_parks_with_fewer_than_40_roller_coasters is incorrect
    return result
    # ...
    # The above line of code is: (A), but the value of result is incorrect
\end{minted}
        \\
        \bottomrule
    \end{tabular}
    \label{tab:exp-tabmwp}
\end{table}


\begin{table}[htbp]
    \centering
    \caption{Examples of few-shot exemplars of generation and evaluation PAL prompts on \texttt{Date Understanding} from \textbf{Big-Bench}. We adopt $6$-shot and $3$-shot learning for generation and evaluation, respectively.}
    \begin{tabular}{p{0.9\textwidth}}
        \toprule
        \underline{\textbf{\textsc{GENERATION}}} \\
        \footnotesize \textbf{Q}: 2015 is coming in 36 hours. What is the date one week from today in MM/DD/YYYY?
\begin{minted}[breaklines,fontsize=\footnotesize]{python}
def solution():
    # If 2015 is coming in 36 hours, then today is 36 hours before.
    today = datetime(2015, 1, 1) - relativedelta(hours=36)
    # One week from today,
    one_week_from_today = today + relativedelta(weeks=1)
    # The answer formatted with %m/%d/%Y is
    result = one_week_from_today.strftime('%m/%d/%Y')
    return result
\end{minted}
        \\
        \midrule
        \underline{\textbf{\textsc{EVALUATION}}} \\
        \footnotesize \textbf{Q}: Yesterday was April 30, 2021. What is the date yesterday in MM/DD/YYYY?
\begin{minted}[breaklines,fontsize=\footnotesize]{python}
def solution():
    # Yesterday was April 30, 2021.
    yesterday = datetime(2021, 4, 30)
    # Is the above line of code:
    # (A) Correct
    # (B) Incorrect
    # The above line of code is: (A)
    # Yesterday,
    yesterday = yesterday - relativedelta(days=1)
    # Is the above line of code:
    # (A) Correct
    # (B) Incorrect
    # The above line of code is: (B), because the date of yesterday was already got and this is step is redundant
    # The answer formatted with %m/%d/%Y is
    result = yesterday.strftime('%m/%d/%Y')
    # Is the above line of code:
    # (A) Correct
    # (B) Incorrect
    # The above line of code is: (A), but the value of yesterday is incorrect
    return result
    # Is the above line of code:
    # (A) Correct
    # (B) Incorrect
    # The above line of code is: (A), but the value of result if incorrect
\end{minted}
        \\
        \bottomrule
    \end{tabular}
    \label{tab:exp-date}
\end{table}

\begin{table}[htbp]
    \centering
    \caption{Examples of few-shot exemplars of generation and evaluation PAL prompts on \texttt{Object Counting} from \textbf{Big-Bench}. We adopt $4$-shot and $1$-shot for generation and evaluation, respectively.}
    \begin{tabular}{p{0.9\textwidth}}
        \toprule
        \underline{\textbf{\textsc{GENERATION}}} \\
        \footnotesize \textbf{Q}: I have a drum, a flute, a clarinet, a violin, four accordions, a piano, a trombone, and a trumpet. How many musical instruments do I have?
\begin{minted}[breaklines,fontsize=\footnotesize]{python}
def solution():
    musical_instruments_to_count = {
        'drum': 1,
        'flute': 1,
        'clarinet': 1,
        'violin': 1,
        'accordion': 4,
        'piano': 1,
        'trombone': 1,
        'trumpet': 1
    }
    num_musical_instruments = sum(musical_instruments_to_count.values())
    result = num_instruments
    return result
\end{minted}
        \\
        \midrule
        \underline{\textbf{\textsc{EVALUATION}}} \\
        \footnotesize \textbf{Q}: I have a peach, an onion, two garlics, a nectarine, a yam, a carrot, a potato, a head of broccoli, a cabbage, a banana, and a stalk of celery. How many vegetables do I have?
\begin{minted}[breaklines, fontsize=\footnotesize]{python}
def solution():
    # note: I'm not counting the peach, nectarine, banana, or celery as vegetables
    vegetables_to_count = {
        'onion': 1,
        # ...
        # The above line of code is: (A)
        'garlic': 2,
        # ...
        # The above line of code is: (A)
        'yam': 1,
        # ...
        # The above line of code is: (A)
        'carrot': 1,
        # ...
        # The above line of code is: (A)
        'potato': 1,
        # ...
        # The above line of code is: (A)
        'broccoli': 1,
        # ...
        # The above line of code is: (A)
        'cabbage': 1
        # ...
        # The above line of code is: (A)
    }
    # ...
    # The above line of code is: (B), because celery is vegetable
    num_vegetables = sum(vegetables_to_count.values())
    # ...
    # ... (A), but the value of vegetables_to_count is incorrect
    result = num_vegetables
    # ...
    # ... (A), but the value of num_vegetables is incorrect
    return result
    # ...
    # ... (A), but the value of result is incorrect
\end{minted}
        \\
        \bottomrule
    \end{tabular}
    \label{tab:exp-obj}
\end{table}


\begin{table}[htbp]
    \centering
    \caption{Examples of few-shot exemplars of generation and evaluation CoT prompts on \textbf{CSQA}. We adopt $7$-shot and $3$-shot learning for generation and evaluation, respectively.}
    \begin{tabular}{p{0.9\textwidth}}
        \toprule
        \underline{\textbf{\textsc{GENERATION}}} \\
        \textbf{Q}: What do people use to absorb extra ink from a fountain pen?\\
        \textbf{Answer Choices}: (a) shirt pocket; (b) calligrapher’s hand; (c) inkwell; (d) desk drawer; (e) blotter \\
        \textbf{A}:\\
        The answer must be an item that can absorb ink.\\
        Of the above choices, only blotters are used to absorb ink.\\
        So the answer is (e).\\
        \midrule
        \underline{\textbf{\textsc{EVALUATION}}} \\
        \textbf{Q}: John's took his appointment book with him to a checkup but couldn't find it anywhere. He looked and he looked but he didn't know where it was.  It wasn't until he bought a new one, went home, took a shower that he found it.  Where was it?\\
        \textbf{Answer Choices}: (a) office supply store; (b) wallet; (c) doctor's office; (d) handbag; (e) pocket\\
        \textbf{A}:\\
        The answer must be a place where John could have put his appointment book.\\
        \footnotesize \specialhighlight{\# Is the above step of reasoning:} \\
        \footnotesize \specialhighlight{\# (A) Correct} \\
        \footnotesize \specialhighlight{\# (B) Incorrect} \\
        \footnotesize \specialhighlight{\# The above step of reasoning is: (B), because it fails to reason out that John will check the place when taking a shower (taking off clothes)} \\
        Of the above choices, only wallet makes sense.\\
        \footnotesize \specialhighlight{\# Is the above step of reasoning:} \\
        \footnotesize \specialhighlight{\# (A) Correct} \\
        \footnotesize \specialhighlight{\# (B) Incorrect} \\
        \footnotesize \specialhighlight{\# The above step of reasoning is: (B), because wallet is too small to put his appointment book, and it cannot be the only one which makes sense (pocket is also plausible)} \\
        So the answer is (b).\\
        \footnotesize \specialhighlight{\# Is the above step of reasoning:} \\
        \footnotesize \specialhighlight{\# (A) Correct} \\
        \footnotesize \specialhighlight{\# (B) Incorrect} \\
        \footnotesize \specialhighlight{\# The above step of reasoning is: (A), but the previous step is incorrect} \\
        \bottomrule
    \end{tabular}
    \label{tab:exp-csqa}
\end{table}

\begin{table}[htbp]
    \centering
    \caption{Examples of few-shot exemplars of generation and evaluation CoT prompts on \textbf{StrategyQA}. We adopt $6$-shot and $4$-shot learning for generation and evaluation, respectively.}
    \begin{tabular}{p{0.9\textwidth}}
        \toprule
        \underline{\textbf{\textsc{GENERATION}}} \\
        \textbf{Q}: Do hamsters provide food for any animals?\\
        \textbf{A}: \\
        Hamsters are prey animals. \\
        Prey animals are food for predators. \\
        Thus, hamsters provide food for some animals. \\
        So the answer is yes. \\
        \midrule
        \underline{\textbf{\textsc{EVALUATION}}} \\
        \textbf{Q}: While viewing "Scary Movie" is the viewer likely to experience an increase in adrenaline?\\
        \textbf{A}: \\
        Adrenaline is a hormone that is released when the body is under stress.\\
        \footnotesize \specialhighlight{\# Is the above step of reasoning:}\\
        \footnotesize \specialhighlight{\# (A) Correct} \\
        \footnotesize \specialhighlight{\# (B) Incorrect} \\
        \footnotesize \specialhighlight{\# The above step of reasoning is: (A)} \\
        Scary movies are designed to make the viewer feel stressed. \\
        \footnotesize \specialhighlight{\# Is the above step of reasoning:} \\
        \footnotesize \specialhighlight{\# (A) Correct} \\
        \footnotesize \specialhighlight{\# (B) Incorrect} \\
        \footnotesize \specialhighlight{\# The above step of reasoning is: (B), because Scary Movie is a film that is a comedy take on horror, intended to make viewers laugh} \\
        Thus, the viewer is likely to experience an increase in adrenaline. \\
        \footnotesize \specialhighlight{\# Is the above step of reasoning:} \\
        \footnotesize \specialhighlight{\# (A) Correct} \\
        \footnotesize \specialhighlight{\# (B) Incorrect} \\
        \footnotesize \specialhighlight{\# The above step of reasoning is: (A), but the previous step is incorrect} \\
        So the answer is yes. \\
        \footnotesize \specialhighlight{\# Is the above step of reasoning:} \\
        \footnotesize \specialhighlight{\# (A) Correct} \\
        \footnotesize \specialhighlight{\# (B) Incorrect} \\
        \footnotesize \specialhighlight{\# The above step of reasoning is: (A), but the previous step is incorrect} \\
        \bottomrule
    \end{tabular}
    \label{tab:exp-sqa}
\end{table}

\begin{table}[htbp]
    \centering
    \caption{Examples of few-shot exemplars of generation and evaluation CoT prompts on \texttt{Sports Understanding} on \textbf{Big-Bench}. We adopt $8$-shot and $2$-shot learning for generation and evaluation, respectively.}
    \begin{tabular}{p{0.9\textwidth}}
        \toprule
        \underline{\textbf{\textsc{GENERATION}}} \\
        \textbf{Q}: Is the following sentence plausible?\\
        \textsc{``Kyle Palmieri was called for slashing.''}\\
        \textbf{A}: \\
        Kyle Palmieri is a hockey player. \\
        Being called for slashing is part of hockey. \\
        So the answer is yes. \\
        \midrule
        \underline{\textbf{\textsc{EVALUATION}}} \\
        \textbf{Q}: Is the following sentence plausible? \\
        \textsc{``Robin Lehner launched the desperation heave''} \\
        \textbf{A}: \\
        Robin Lehner is a hockey player. \\
        \footnotesize \specialhighlight{\# Is the above step of reasoning:} \\
        \footnotesize \specialhighlight{\# (A) Correct} \\
        \footnotesize \specialhighlight{\# (B) Incorrect} \\
        \footnotesize \specialhighlight{\# The above step of reasoning is: (B), because Robin Lehner is a goaltender which is different from a player} \\
        Launching the desperation heave is part of hockey. \\
        \footnotesize \specialhighlight{\# Is the above step of reasoning:} \\
        \footnotesize \specialhighlight{\# (A) Correct} \\
        \footnotesize \specialhighlight{\# (B) Incorrect} \\
        \footnotesize \specialhighlight{\# The above step of reasoning is: (B), because launching the desperation heave is for player, not goaltender} \\
        So the answer is yes. \\
        \footnotesize \specialhighlight{\# Is the above step of reasoning:} \\
        \footnotesize \specialhighlight{\# (A) Correct} \\
        \footnotesize \specialhighlight{\# (B) Incorrect} \\
        \footnotesize \specialhighlight{\# The above step of reasoning is: (A), but the previous step is incorrect} \\
        \bottomrule
    \end{tabular}
    \label{tab:exp-sports}
\end{table}

\end{document}